\documentclass{article}

\PassOptionsToPackage{numbers,sort&compress}{natbib}
\usepackage[preprint]{neurips_2026}

\usepackage[utf8]{inputenc} % allow utf-8 input
\usepackage[T1]{fontenc}    % use 8-bit T1 fonts
\usepackage{hyperref}       % hyperlinks
\usepackage{url}            % simple URL typesetting
\usepackage{booktabs}       % professional-quality tables
\usepackage{amsfonts}       % blackboard math symbols
\usepackage{nicefrac}       % compact symbols for 1/2, etc.
\usepackage{microtype}      % microtypography
\usepackage{xcolor}         % colors

\title{\ourmethod: Fast Drug Discovery by Learning from Sparse Data}

\author{%
  Rahul Nandakumar \\
  McCombs School of Business\\
  University of Texas at Austin\\
  Austin, TX \\
  \texttt{rahul.nandakumar@utexas.edu} \\
  \And
    Ben Fauber \\
  NVIDIA\\
  \texttt{bfauber@nvidia.com} \\
  \And
  Deepayan Chakrabarti \\
  McCombs School of Business\\
  University of Texas at Austin\\
  Austin, TX \\
  \texttt{deepay@utexas.edu} \\
}

\usepackage{bm, xspace, dsfont}
\usepackage{algorithm, algpseudocode}
\usepackage{xparse}
\usepackage{nicematrix}
\usepackage{amsmath, amsthm}
\usepackage{subcaption}
\usepackage{caption}
\usepackage{xcolor}
\usepackage{wrapfig}

\usepackage{multirow}
\usepackage{booktabs}
\usepackage{float}

\usepackage{enumitem}
\usepackage{soul}

\usepackage{stfloats}

\sethlcolor{red!30}

\makeatletter
\NewDocumentCommand{\LeftComment}{s m}{%
  \Statex \IfBooleanF{#1}{\hspace*{\ALG@thistlm}}\(\triangleright\) #2}
\makeatother

\DeclareMathOperator*{\argmin}{arg\,min}

\newcommand{\txtred}[1]{\color{red}#1\color{black}\xspace}

\newcommand{\ourmethod}{\textsc{SPADE}\xspace}
\newcommand{\ourmethodwithreason}{\textsc{SPADE} ({\bf S}parse-data {\bf P}redictions for {\bf A}ccelerating {\bf D}rug {\bf E}xploration)\xspace}

\newcommand{\bx}{{\ensuremath{\bm x}}\xspace}
\newcommand{\bv}{{\ensuremath{\bm v}}\xspace}
\newcommand{\bw}{{\ensuremath{\bm w}}\xspace}

\theoremstyle{plain}
\newtheorem{theorem}{Theorem}[section]

\theoremstyle{definition}

\theoremstyle{remark}
\newtheorem{remark}[theorem]{Remark}
\begin{document}

\maketitle

\begin{abstract}
% Many diseases stem from proteins that are out of balance.
Drug discovery seeks molecules (ligands) that bind strongly and selectively to a target protein.
However, fewer than $5\%$ of candidate ligands pass the bar for even the early stages of drug discovery.
Furthermore, we want methods that work for novel proteins for which we have no prior data.
Starting from scratch, we have to iteratively select and test candidate ligands such that we find enough ligands of the desired quality in as few tests as possible.
Our proposed algorithm, named \ourmethod, introduces a novel approach to ligand selection that requires only $40$ tests on average to find $10$ high-quality ligands.
In one-vs-one comparisons, SPADE outperforms deep learning and Bayesian optimization methods on more proteins, achieving median improvements of $7\%-32\%$ in sample efficiency.
%In direct comparisons, \ourmethod requires $7\%-32\%$ fewer tests than state-of-the-art deep learning and Bayesian optimization methods, and 
\ourmethod is also 10x faster than its closest competitor at scoring candidate drugs. Dataset and code is available at \url{https://anonymous.4open.science/r/SPADE_Fast_Drug_Discovery_by_Learning_from_Sparse_Data-F028/README.md}
% Finally, to support future research, we constructed and will release a new dataset of $1.5M$ 
% %\textcolor{red}{(3.5 M instead?)} 
% ligand-protein affinities from PubChem that complements existing datasets.
%{\em Our code and new benchmark dataset, containing 3.5 million protein-ligand affinities, will be released after incorporating comments from reviewers.}
\end{abstract}

\section{Introduction}
\label{sec:intro}

Proteins regulate nearly all biological processes in the human body, and disruptions to their activity can lead to disease. A central goal of drug discovery is therefore to design small molecules (ligands) that bind strongly and selectively to a target protein \cite{Knowles2003NRDD, Waring2015AnAO}. Despite decades of progress, this process remains highly inefficient: fewer than 5\% of candidate ligands succeed even in early-stage screening. The challenge is particularly acute for \emph{novel} proteins, where little or no prior data is available.

In early-stage drug discovery, researchers iteratively run design-make-test-analyze (DMTA) cycles \cite{PLOWRIGHT201256}. In each cycle, a small number of ligands are selected, synthesized, and experimentally evaluated. The outcome of interest is the ligand’s binding affinity, typically measured via the pIC50 (PIC) value \cite{Swinney2011HowWN}, where higher values indicate stronger binding. Since experimental tests are slow and expensive, the key objective is to identify a small number of (say, $5$) high-quality ligands (e.g., PIC $\geq 8$) using as few tests as possible. We refer to this objective as a \emph{race-to-$k$} problem, that is:

\begin{center}
    find $k$ ligands with PIC above a desired threshold in as few DMTA cycles as possible.
\end{center}
A PIC of $8$ or $8.5$ is often a favorable starting point for the later stages of drug discovery~\cite{LombardinoLoweNRDD2004}.
Hence, we focus on the {\em race-to-8} and {\em race-to-8.5} problems.

This setting presents several fundamental challenges. First, the search space is vast, consisting of millions of candidate ligands represented by high-dimensional embeddings. Second, the signal is extremely sparse: only a small fraction of ligands meet the desired affinity threshold (e.g., only 7\% of candidates have PIC $\geq 8$, and just 2.7\% have PIC $\geq 8.5$).
Third, we start with no labeled data and must learn entirely from sequential, adaptively collected observations. These properties make standard machine learning approaches difficult to apply effectively.

Recent work approaches this problem using active learning and Bayesian optimization. These methods estimate the PIC of each candidate ligand and combine it with an uncertainty estimate (e.g., from a Gaussian process) to guide selection~\cite{griffiths2024gauche}. However, in our setting, accurate estimation is difficult due to extreme data sparsity and high dimensionality: we begin with no data, and each ligand is represented by a $2{,}048$-dimensional embedding. Moreover, most ligands fail to meet the desired threshold.
%(only 7\% exceed PIC 8, and just 3\% exceed 8.5). 
As a result, estimating PICs is both difficult and unnecessary for the race-to-$k$ objective.

Another line of work constructs protein embeddings~\cite{rives2019biological,passaro2025boltz2}. However, directly predicting affinity using such embeddings achieves limited accuracy~\cite{gorantla24proteins,FauberLPI2024}. Since we focus on novel proteins, we assume that no similar proteins are available, and thus do not rely on protein embeddings. If such information is available, it can be incorporated as a preprocessing step to filter the candidate ligand set (we show such an experiment in Appendix~\ref{sec:app:expdetails}).

Finally, we emphasize that our work focuses on {\bf ligand affinity, which is fundamentally distinct from molecular docking}~\cite{stein_property-unmatched_2021}.
Our approach is binding-mode agnostic and focuses on continuous affinity (PIC) values. 
In contrast, datasets such as DUD-E \cite{mysinger2012directory} provide only docking scores, while LIT-PCBA \cite{tran2020lit} offers binary active/inactive labels. 
Neither dataset captures the continuous affinity information essential for ranking and prioritizing ligands, rendering them unsuitable for our objective.

% Finally, we emphasize that {\bf ligand affinity prediction (our problem) is different from molecular docking}~\cite{stein_property-unmatched_2021}. 
% Our approach is binding-mode agnostic. 
% Datasets such as DUD-E \cite{mysinger2012directory} provide docking scores rather than PIC values.
% LIT-PCBA \cite{tran2020lit} provides only binary labels, but drug discovery needs continuous values to prioritize ligands. 
% Hence, these datasets are unsuitable for our continuous, affinity-based objective.

\textbf{Our contributions:}
\begin{itemize}
    \item \textbf{Problem formulation:} We cast early-stage drug discovery for novel proteins as a sparse, sequential \emph{race-to-$k$} problem, where the objective is to identify $k$ high-affinity ligands using as few DMTA cycles as possible.

    \item \textbf{Classification-based selection (Sec.~3):} We propose \ourmethodwithreason, which replaces full PIC estimation with the simpler task of predicting whether a ligand can outperform the current $k^{th}$-best ligand.
    Thus, \ourmethod focuses directly on improving the top-$k$ set.

    \item \textbf{Robust learning under extreme sparsity (Sec.~3):} Since the positive class contains only $k\!-\!1$ ligands, standard classifiers struggle. Instead, we introduce a robust classifier that minimizes the expected loss over a Gaussian centered at each positive example, reducing overfitting to noise. We derive a closed-form expression for the expected loss to avoid sampling.
    We then combine these classifiers to efficiently identify the best candidate ligands.

    \item \textbf{Large-scale dataset (Sec.~4):} We built a new $1.5$M-entry PubChem-derived dataset, complementary to existing datasets, to test sequential ligand discovery under realistic sparsity.

    \item \textbf{Empirical evaluation (Sec.~4):} Across 100 proteins, \ourmethod\ consistently outperforms state-of-the-art baselines, requiring $7\%\!-\!32\%$ fewer ligand tests to reach target PICs and achieving a $10\times$ speedup in scoring ligands.
\end{itemize}

\section{Related Work}
\label{sec:related}

% LPI affinity prediction
%   - ML and Deep Learning models
%   - They often use small binary datasets or small search spaces of 48 proteins, 48 ligands (Naik)
%   - Less useful for ranking the top ligands for screening
%   - FEP models
% Data representations
%   - Embeddings, graphs-based models
%   - Have limited impact on prediction accuracy.`
% Active learning
%   - Goal is to learn conditional dist over entire feature space, which may not be optimal for minimizing DMTA cycles
%   - Effort into creating balanced training set, since good ligands are very rare.
%   - Kangas finds exploitation is best for our goal. Reker finds that exploration may not be as rapid as random sampling in terms of target coverage speed.
%   - We focus on starting from scratch; race-to-8

We review prior work on Ligand-Protein Interaction affinity (LPI) prediction, protein and ligand embeddings, and active-learning for drug discovery.

\textbf{LPI prediction:}
Several papers cast LPI prediction as a binary classification task.
Then, they apply machine learning models to learn affinities
\cite{Oliveira2024InferringMI, Kimber2021DeepLI, MartinAllAssayMax2QSAR2019, Mayr2018LargescaleCO, Martin2011ProfileQSARAN, Yamanishi2008PredictionOD, FaulonMisraDTIPred2007}.
Deep learning models have also been developed for this task
\cite{Huang2020DeepPurposeAD, Huang2020MolTransMI, Li2020MONNAM, ztrk2019WideDTAPO, Whitehead2019ImputationOA, Lee2018DeepConvDTIPO, beyondthehypeDL2017, Wen2017DeepLearningBasedDI}.
However, simple binary labels, such as active/inactive, oversimplifies the continuous nature of binding affinities.
Hence, these methods have been less useful for ranking ligands for drug screening
\cite{Sadybekov2021SynthonbasedLD}.
In contrast, \ourmethod considers only the current top-$k$ ligands as the positive class, so the labels shift over the DMTA cycles. 

Another approach is to use physics-based methods like free-energy perturbation.
These methods attempt to mimic the protein-ligand binding interactions via computational simulations. 
They can offer more precise results but are computationally intensive
\cite{Wang2015AccurateAR, Ross2023TheMA, doi:10.1021/acs.jcim.0c00900}.
Hence, even if these methods are used instead of lab tests in DMTA cycles, we still need to minimize the number of cycles.

\textbf{Protein and ligand embeddings:}
Existing approaches try to predict ligand-protein interaction affinity using vector embeddings \cite{Kimber2021DeepLI, Kalakoti2022TransDTITL} and graph-based models \cite{Svensson2024HyperPCMRT, Chatterjee2023ImprovingTG, Thafar2022Affinity2VecDB, Kimber2021DeepLI} for protein and ligand embeddings.
Several specialized embeddings also exist for proteins~\cite{passaro2025boltz2, rives2019biological} and for ligands~\cite{rogers_extended-connectivity_2010, durant_reoptimization_2002}.
However, embedding methods have limited impact on prediction accuracy \cite{gorantla24proteins, FauberLPI2024}.

We focus on drug discovery for novel proteins.
So, we assume there is no side information available about similar proteins.
In our setting, we only have one protein (the target protein), so there is no need for protein embeddings.
Our work is agnostic to the choice of ligand embeddings, and we show results using the popular ECFP~\cite{rogers_extended-connectivity_2010}, MACCS~\cite{durant_reoptimization_2002}, {and ChemBERTa~\cite{chithrananda2020chemberta} ligand embeddings.

\textbf{Active learning and Bayesian optimization:}
Since most ligands have poor LPI, existing datasets are imbalanced.
Active learning has been used to develop balanced training datasets for LPI prediction via explore-exploit strategies.
However, the best way to find good interacting pairs is to only exploit
\cite{kangas_efficient_2014}.
Also, active learning is often used to learn the entire protein-ligand interaction landscape.
In contrast, the goal of early-stage drug discovery is to quickly find a few high-quality ligands.
Learning about the entire landscape is not necessary.

Bayesian optimization is another approach for selecting the ligands to test in each DMTA cycle. 
Here, we rank candidate ligands by combining their estimated PIC with uncertainty scores obtained via Gaussian processes
~\cite{griffiths2024gauche}.
The top-ranked ligands are tested in the next DMTA cycle, and all estimates are recomputed using the new data.
But in our setting, there are very few ligands with known PICs, and the ligand embedding is high-dimensional.
This leads to large uncertainty for ligand PIC estimates.
\ourmethod avoids estimation and instead uses robust classification.
We show that we outperform Bayesian optimization methods, and are also 10x faster than the closest competitor.

\textbf{Structure-based and deep virtual screening:}
These methods analyze the ligand and protein structures to predict docking.
However, docking is different from affinity prediction.
Also, the best-performing methods typically require seconds of
compute per ligand even with GPU acceleration~\cite{passaro2025boltz2}, while SPADE is 3-4 orders of magnitude faster.
Finally, such methods do not learn from DMTA cycles, unlike \ourmethod.
We note that screening methods can filter the set of ligands as a preprocessing step for \ourmethod.
This can improve accuracy by 18\%-31\% (Appendix~\ref{sec:app:expdetails}).

\section{Proposed Method}
\label{sec:prop}

Our goal is to identify several high-affinity ligands using as few DMTA cycles as possible. This setting presents several key challenges:

\noindent\textbf{(1) Rare targets in a vast search space.}
We must search among $10^5$--$10^6$ candidate ligands, with a median PIC of $5.9$. The distribution is highly skewed: about $75\%$ of ligands have PIC $< 7$, only $7\%$ have PIC $\geq 8$, and just $2.7\%$ exceed $8.5$.

\noindent\textbf{(2) Sparse data.}
For novel proteins, we begin with no labeled data. Each data point must be obtained via costly and time-consuming ligand tests, so the model must learn effectively from little data.
Furthermore, since our goal is to improve the top-$k$ ligands in each DMTA cycle, the ``positive class'' remains extremely small (namely, the $k-1$ ligands that are currently the best).

\noindent\textbf{(3) High dimensionality.}
Each ligand is represented by a $2{,}048$-dimensional ECFP vector~\cite{rogers_extended-connectivity_2010}. This high dimensionality, combined with data sparsity, makes accurate PIC estimation difficult.

\noindent\textbf{(4) High throughput requirements.}
In each DMTA cycle, we must score $10^5-10^6$ candidate ligands.
Hence, the scoring needs to be fast and efficient.

%Traditional approaches, designed for large i.i.d.\ datasets and fixed prediction targets, are ill-suited to this setting.

% \noindent\textbf{(4) Moving target.}
% Our goal is to find $k$ ligands with high PICs. As more ligands are tested, the $k^{\text{th}}$-best PIC increases. Thus, the decision boundary evolves over time, while the positive class remains extremely small (only $k$ points).

% \noindent\textbf{(4) Correlated data.}
% Ligands are selected adaptively based on previous outcomes, leading to correlated samples. This violates the i.i.d.\ assumptions underlying many standard learning methods.

\subsection{Overview of \ourmethod}

\ourmethod departs from standard approaches in two fundamental ways: it replaces global affinity estimation with a targeted classification objective, and it incorporates robustness directly into the learning process. Together, these design choices yield a method that is well-suited to extreme data sparsity, requires minimal tuning, and is significantly faster at scoring candidate ligands.

\paragraph{Classification instead of estimation.}
Most existing methods attempt to estimate the PIC for every candidate ligand.
%, effectively learning the full affinity landscape. 
In our low-data setting, this is both difficult and unnecessary. Instead, \ourmethod focuses on a simpler and more targeted task: predicting whether a ligand can outperform the current $k^{\text{th}}$-best ligand. 
%This avoids modeling the full landscape and 
This directly optimizes for our goal of improving the top-$k$ set in each DMTA cycle.
%, which is the objective in each DMTA cycle.

\paragraph{Robustness under extreme sparsity.}
The positive class consists of only the top-$k$ ligands seen so far.
%, making standard empirical risk minimization highly sensitive to noise. To address this, 
To be robust against noise, \ourmethod optimizes an expected loss over a distribution centered at each positive ligand. This acts as a principled regularizer, encouraging the model to learn stable patterns rather than overfitting to a handful of points. We further design the loss so that this expectation admits a closed-form expression, avoiding the need for sampling.

%\medskip
In each DMTA cycle, \ourmethod trains one robust classifier for each of the current top-$k$ ligands. It then scores all untested ligands using a weighted combination of these classifiers, and selects the highest-scoring candidates for testing in the next cycle. This process iteratively refines the top-$k$ set.
Next, we discuss these steps in detail.

\subsection{Classifier for a Single Top Ligand}
After every DMTA cycle, we have a dataset $\mathcal{D}_{seen}$ of the ligands tested so far, along with their PICs.
Let $\mathcal{S}^+\subset \mathcal{D}_{seen}$ be the set of the best few ligands seen so far, and let $\mathcal{S}^-:=\mathcal{D}_{seen}\setminus \mathcal{S}^+$.
For each ligand $i\in\mathcal{S}^+$, we build a classifier $\mathcal{C}_i$ that separates $i$ from all the ligands in $\mathcal{S}^-$:
\begin{equation}
\begin{aligned}
    \mathcal{C}_i := \argmin_{\mathcal{C}\in\mathcal{F}}  \left(
\frac{\sum_{j\in\mathcal{S}^-} \ell(\mathcal{C}(\bx_j), y=-1)}{|\mathcal{S}^-|}
+ E_{\bx\sim\mathcal{N}(\bx_i, \sigma^2 I)} \left[\ell(\mathcal{C}(\bx), y=1)\right]
\right)
\end{aligned}
\label{eq:loss}
\end{equation}

where $\bx_j\in\mathbb{R}^d$ denotes the embedding of ligand~$j$, $\mathcal{F}$ is the search space over classifiers, $\mathcal{C}$ is any classifier in $\mathcal{F}$, $C(\bx)$ is the predicted score of $C$ on a ligand with embedding $\bx$, and $\ell(s, y)$ is the loss if a score $s$ is predicted for a ligand with class $y\in\{+1, -1\}$.
%$y=-1$ (ligand in $\mathcal{S}^-$) or $y=1$ (ligand in $\mathcal{S}^+$).

{\textbf{Difference from empirical risk minimization:}
Instead of the empirical loss on $\mathcal{S}^+$, we use the {\em expected loss} over a Gaussian distribution. This distribution is centered at $\bx_i$ for each $i\in\mathcal{S}^+$.
Intuitively, $\bx_i$ is a sample from a distribution of similar-PIC ligands.
Since we do not have enough data to reconstruct that distribution, we use a Gaussian distribution as an approximation.
The width $\sigma$ of the Gaussian reflects our uncertainty about the distribution.
Larger $\sigma$ means greater uncertainty and a more robust classifier $\mathcal{C}_i$.
A well-chosen $\sigma>0$ ensures that $\mathcal{C}_i$ identifies the specific features of $\bx_i$ that (a)~set it apart from the low-PIC ligands in $\mathcal{S}^-$ and (b)~are not due to randomness.

{\textbf{Choice of loss function:}
We choose $\ell(\cdot, \cdot)$ so that  Equation~\ref{eq:loss} has a closed-form formula.
Specifically, let any classifier $\mathcal{C}\in\mathcal{F}$ be parametrized by the pair $(c, \bw)\in\mathbb{R}\times\mathbb{R}^d$.
The score for $\mathcal{C}$ on a ligand $\bx\in\mathbb{R}^d$ is given by $\mathcal{C}(\bx):=c+\bw^T\bx$.
We use the loss $\ell(\mathcal{C}(\bx), y):=\max\{0, 1-y\cdot \mathcal{C}(\bx)\}$ for $y\in\{+1, -1\}$.
These choices offer two main benefits.
First, we can learn from all features without making the model too complex and unstable.
This is important for robustness, as the data is both small and imbalanced in our problem.
Also, we can calculate the expected-loss term in Equation~\ref{eq:loss} in closed form, {\em without the need for sampling}.

% We use the hinge loss $\ell(\mathcal{C}(\bx), y):=\max\{0, 1-y(c+\bw^T\bx)\}$ for $y\in\{+1, -1\}$.

\begin{theorem}
Let $s_i := 1 - (c + \bw^T\bx_i)$ with $\|\bw\|>0$, and let $\Phi(\cdot)$ and $\phi(\cdot)$ represent the cdf and pdf of a $\mathcal{N}(0,1)$ distribution. Then, we have:
\begin{equation}
\nonumber
\begin{aligned}
 E_{\bx\sim\mathcal{N}(\bx_i, \sigma^2 I)} \left[\ell(C(\bx), y=1)\right]
 = s_i\cdot\Phi\left(\frac{s_i}{\sigma\|\bw\|} \right) + \sigma\|\bw\|\cdot\phi\left(\frac{s_i}{\sigma\|\bw\|} \right).
\end{aligned}
\end{equation}
%Here, $\Phi(\cdot)$ and $\phi(\cdot)$ represent the cdf and pdf of a $\mathcal{N}(0,1)$ distribution.
\label{thm:eloss}
\end{theorem}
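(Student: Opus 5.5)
The plan is to reduce the $d$-dimensional expectation to a one-dimensional Gaussian integral and then evaluate that integral in closed form. For $y=1$ the loss is $\ell(\mathcal{C}(\bx),1)=\max\{0,\,1-c-\bw^T\bx\}$. Write $\bx=\bx_i+\sigma\bm z$ with $\bm z\sim\mathcal{N}(0,I)$. Then
\[
1-c-\bw^T\bx \;=\; s_i-\sigma\,\bw^T\bm z .
\]
Since $\|\bw\|>0$, the scalar $\bw^T\bm z/\|\bw\|$ is a linear combination of independent standard normals with unit-norm coefficients, so it is itself $\mathcal{N}(0,1)$. By the symmetry $Z\overset{d}{=}-Z$, the random variable $1-c-\bw^T\bx$ is therefore distributed as $S:=s_i+\tau Z$, where $\tau:=\sigma\|\bw\|$ and $Z\sim\mathcal{N}(0,1)$. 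The target becomes $E[\max\{0,S\}]$, the mean of the positive part of a $\mathcal{N}(s_i,\tau^2)$ variable. We need $\tau>0$, which holds provided $\sigma>0$; this is implicit in the statement, since otherwise the ratio $s_i/(\sigma\|\bw\|)$ is undefined.

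Next we compute this mean directly. We have
\[
E[\max\{0,S\}]=\int_{-s_i/\tau}^{\infty}(s_i+\tau z)\,\phi(z)\,dz .
\]
Split this into two terms.

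The first term is
\[
s_i\int_{-s_i/\tau}^\infty\phi(z)\,dz \;=\; s_i\bigl(1-\Phi(-s_i/\tau)\bigr) \;=\; s_i\,\Phi(s_i/\tau).
\]

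The second term is $\tau\int_{-s_i/\tau}^\infty z\,\phi(z)\,dz$. Using $\phi'(z)=-z\,\phi(z)$, the integral equals $\phi(-s_i/\tau)$, which is $\phi(s_i/\tau)$ by the evenness of $\phi$. So this term is $\tau\,\phi(s_i/\tau)$.

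Substituting $\tau=\sigma\|\bw\|$ and adding the two terms gives exactly the expression in Theorem~\ref{thm:eloss}.

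The only step that needs care is the first: justifying that the projection $\bw^T\bm z$ is univariate normal with variance $\|\bw\|^2$, and tracking the sign flip. The remaining integration is a standard computation, because the identity $\phi'=-z\phi$ makes it an exact antiderivative. No truncated-moment machinery beyond that is required.
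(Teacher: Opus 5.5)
Your proof is correct and follows the same route as the paper: a linear change of variables reduces the expectation to $E[\max\{0,S\}]$ with $S\sim\mathcal{N}(s_i,\sigma^2\|\bw\|^2)$, which is then evaluated in closed form. The only difference is that the paper cites the standard truncated-normal expectation formula for the last step, whereas you derive it directly via $\phi'(z)=-z\,\phi(z)$; your remark that $\sigma>0$ is implicitly required is also correct.
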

% The proof is similar to Theorem~1 of~\cite{chakrabarti_robust_2022}.
The proof is shown in Appendix~\ref{sec:app:proofs}.
Using Theorem~\ref{thm:eloss}, the loss function in Equation~\ref{eq:loss} has a closed-form formula.
Furthermore, the objective is convex since it is the expectation over a convex loss.
Hence, we can minimize Equation~\ref{eq:loss} using any standard convex solver.

% \txtred{\emph{Remark on the Gaussian distribution:}

\begin{remark}
In Theorem~\ref{thm:eloss}, the Gaussian serves as a tractable regularizer that yields a closed-form expected loss.
We do not sample from the high-dimensional Gaussian; thus, we avoid approximation errors.
Also, the simplicity of $\mathcal{C}(\cdot)$ makes scoring ligands extremely fast.
\end{remark}

% centered at each positive ligand. The Gaussian enters the algorithm only
% through the closed-form expected loss in Theorem~\ref{thm:eloss}, where
% it is integrated away analytically. Consequently, concerns about the
% chemical plausibility of continuous perturbations on discrete ECFP
% fingerprints do not apply: no perturbed fingerprint is ever instantiated,
% scored, or used for training. The Gaussian serves purely as a mathematical
% device to produce a tractable regularizer. Alternative discrete distributions
% over the fingerprint space could be more chemically interpretable, but to our
% knowledge none admit a closed-form expected loss; approximating the
% expectation by sampling would reintroduce the approximation error that the
% closed-form solution is designed to avoid, and the required sample count
% grows rapidly with the dimensionality of the feature space.}

% \medskip\noindent
% {\bf Combining classifiers for the top ligands:}
\subsection{Combining Classifiers for Multiple Top Ligands}
Suppose we have trained the classifiers $\mathcal{C}_i$ for all $i\in\mathcal{S}^+$.
Now, we score each untested ligand (the set $\mathcal{D}_{rest}$) via
\begin{align}
\text{score}(\bx_j) &:=  \sum_{i\in\mathcal{S}^+} \alpha^{p_i}\cdot \mathcal{C}_i(\bx_j),
\label{eq:score}
\end{align}
where $p_i$ is the PIC of ligand~$i$, and $\alpha\geq 1$ is a model parameter.
In other words, ligand $j\in\mathcal{D}_{rest}$ is predicted to have a high PIC if it is scored highly by one or more of the classifiers $\mathcal{C}_i$ for $i\in\mathcal{S}^+$.
Intuitively, the classifiers identify patterns that differentiate the top ligands from the rest.
Hence, the weighted sum selects ligands that possess several of these differentiating factors.
The weights $\alpha^{p_i}$ ensure that better ligands in $\mathcal{S}^+$ are given more importance.
The ligands in $\mathcal{D}_{rest}$ with the highest scores are selected for testing in the next DMTA cycle.

% {\bf Combining classifiers for the top ligands:}
% Suppose we have trained the classifiers $\mathcal{C}_i$ for all $i\in\mathcal{S}^+$.
% Now, from the set $\mathcal{D}_{rest}$ of untested ligands, we need to select a few ligands to test in the next DMTA cycle.

% We select the ligands with the highest weighted score, computed as follows.
% For a ligand $j\in\mathcal{D}_{rest}$ with embedding $\bx_j$, we calculate the weighted sum $\sum_{i\in\mathcal{S}^+} \alpha^{p_i}\cdot \mathcal{C}_i(\bx_j)$, where each classifier's score on $\bx_j$ is weighted by the factor $\alpha^{p_i}$.
% Here, $p_i$ is the PIC of ligand~$i$, and $\alpha\geq 1$ is a model parameter. 
% In other words, ligand $j$ is predicted to have a high PIC if it is scored highly by one or more of the classifiers $\mathcal{C}_i$ for $i\in\mathcal{S}^+$.
% Intuitively, the classifiers identify patterns that differentiate the top ligands from the rest.
% Hence, the weighted sum selects ligands that possess several of these differentiating factors.
% The weights $\alpha^{p_i}$ ensure that better ligands in $\mathcal{S}^+$ are given more importance.

\begin{algorithm}[t]
    \small
    \begin{algorithmic}[1]
    \Function{\ourmethod}{$\mathcal{D}_{seen}$, $\mathcal{D}_{rest}$, $\sigma, n_{max}, \alpha, \beta, p^+$}
    \LeftComment $\mathcal{D}_{seen}=\{(\bx_i, p_i)\}$ sorted in decreasing order of PICs $p_i$
    \State $m \gets |\{p_i\geq p^+\mid (\bx_i, p_i)\in\mathcal{D}_{seen}\}|$
    \State $n^+\gets \min(n_{max}, m)$ %\Comment{$n^+=|\mathcal{S}^+|$}
    \State $\mathcal{S}^- \gets \{\bx_i\mid (\bx_i, p_i)\in\mathcal{D}_{seen}, i>\max(\lceil\beta|\mathcal{D}_{seen}|\rceil, m)\}$
    %\State $n^+\gets \min(n_{max}, \max(\beta|\mathcal{D}_{seen}|, m))$ \Comment{$n^+=|\mathcal{S}^+|$}
    %\State $\mathcal{S}^- \gets \{\bx_i\mid (\bx_i, p_i)\in\mathcal{D}_{seen}, i>n^+\}$
    \ForAll{$i\in \{1\ldots n^+\}$}
        \State $\mathcal{C}_i \gets \text{Classify}(\{\bx_i\} \text{ versus } \mathcal{S}^-\mid \sigma)$\Comment{(Eq.~\ref{eq:loss})}\label{alg:ourmethod:Ci}
    \EndFor
    \State $T\gets \left\{\sum_{i\leq n^+} \alpha^{p_i}\cdot \mathcal{C}_i(\bx_j)\mid \bx_j\in\mathcal{D}_{rest}\right\}$\label{alg:ourmethod:score} \Comment{(Eq.~\ref{eq:score})}
    \State\Return top scoring ligands from $T$
    \EndFunction
    \end{algorithmic}
    \caption{\ourmethod}
    \label{alg:ourmethod}
\end{algorithm}

%\medskip\noindent
\subsection{Overall Algorithm}
Algorithm~\ref{alg:ourmethod} shows how \ourmethod selects ligands for one DMTA cycle.
We have a set $\mathcal{D}_{seen}$ of previously tested ligands (whose PICs are known) and the untested set $\mathcal{D}_{rest}$.
From $\mathcal{D}_{seen}$, we select the top ligands with PIC$\geq p^+$, keeping at most $n_{max}$ of these for the positive class.
%some chosen threshold $p^+$.
%If $m$, and keep the top $n_{max}$ of these for the positive class.
%We also ensure that the positive class has at least the top $\beta$-fraction of all seen ligands, and at most $n_{max}$ ligands.
For the negative class, we keep all ligands except those with PIC above $p^+$ or those among the top-$\beta$ fraction of ligands.
%All other ligands in $\mathcal{D}_{seen}$ form the negative set $\mathcal{S}^-$.
Then, we train the classifiers $\mathcal{C}_i$ ($i\in\{1, \ldots, n^+\}$) in step~\ref{alg:ourmethod:Ci}, and score the ligands in $\mathcal{D}_{rest}$ in step~\ref{alg:ourmethod:score}.
The top-scoring ligands from $\mathcal{D}_{rest}$ are then selected for testing in the next DMTA cycle.
Once lab tests reveal their PICs, we add them to $\mathcal{D}_{seen}$.
These become inputs for the next DMTA cycle.
Our novel combination of Gaussian regularization for robustness and the simple form of the classifier and scoring function enables \ourmethod to work even with limited data while scoring candidate ligands $10x$ faster than competing methods.

{\bf Implementation details:}
We standardize the scores output by each $\mathcal{C}_i$ for all ligands in $\mathcal{D}_{rest}$ to zero mean and unit variance (step~\ref{alg:ourmethod:Ci}).
%We find that we get better results if we limit the summation in step~\ref{alg:ourmethod:score} to only the $C_i$s with $p_i\geq p^+$.
Finally, we noticed that if one ligand with a very high PIC is found in the early cycles, it can dictate the choice of ligands for several future cycles.
%This is because that single ligand's scores are given exponentially more weight in step~\ref{alg:ourmethod:score}.
To improve reliability, we limit each ligand $i\in\mathcal{D}_{seen}$ to help select at most $10$ other ligands across all cycles.
In our experiments, we use $\alpha=5, \sigma=1, \beta=0.05, n_{max}=20,$ and $p^+=7$.
In Section~\ref{sec:exp:sensitivity}, we show that apart from $\sigma$, which controls robustness, our results are insensitive to all hyper-parameters.
\section{Experiments}
\label{sec:exp}

We ran experiments to answer the following questions: 
(a)~How quickly do \ourmethod and competing methods find enough ligands with the desired PIC?
(b)~How much computation do they need to score candidate ligands?
(c)~How robust is \ourmethod to its model parameters and choice of ligand embedding?
(d)~How much does each component of \ourmethod contribute to its performance?

\begin{table*}[tbp]
\centering
{\footnotesize
\begin{tabular}{c|ccccc|ccccc}
\toprule
 & \multicolumn{5}{c|}{{\bf Average (Top-10)}} & \multicolumn{5}{c}{{\bf Min (Top-3)}}\\\cline{2-11}
 & \multicolumn{5}{c|}{\textbf{Target PIC}}  & \multicolumn{5}{c}{\textbf{Target PIC}}\\
{\bf Method} & 7.0 & 7.5 & 8.0 & 8.5 & 9.0 & 7.0 & 7.5 & 8.0 & 8.5 & 9.0 \\
\midrule\midrule
Random 	& 21 	& 35 	& 76 	& 215 	& 397 	& 13 	& 21 	& 46 	& 140 	& 348 \\
XGBoost 	& 21 	& 33 	& 55 	& 104 	& 249 	& 13 	& 22 	& 39 	& 82 	& 186 \\
MLP 	& {\bf 17} 	& 25 	& 45 	& 108 	& 288 	& {\bf 12} 	& {\bf 18} 	& 31 	& 78 	& 199 \\
XGBRegressor 	& 18 	& 27 	& 46 	& 103 	& 265 	& {\bf 12} 	& 20 	& 34 	& 78 	& 201 \\
\midrule
TabPFN & 18 & 26 & 48 & 122 & 276 & {\bf 12} & 19 & 35 & 94 & 217 \\
TabM & {\bf 17} &  27 &  59 & 141 & 294 & {\bf 12} &  20 &  41 & 115 & 249 \\
\midrule
GP-M & {\bf 17}  & 27 &   53 & 136 & 290 & {\bf 12} &  21 &  41 & 112 & 235 \\
GP-UCB & {\bf 17} &  28 &  51 & 126 & 277 & {\bf 12}  & 22 &  38 &  99 & 217 \\
GP-EI & 18  & 27 &  44  & {\bf 92} & {\bf 228} & 13  & 20 &  33  & {\bf 70} & 176 \\
GP-PI & {\bf 17} &  25 &  41  & 99 & 251 & {\bf 12} &  19  & 33 &  80 & 184 \\
\midrule
{\bf \ourmethod} 	& {\bf 17} 	& {\bf 24} 	& {\bf 40} 	& {\bf 92} 	& 257 	& {\bf 12} 	& {\bf 18} 	& {\bf 29} 	& 71 	& {\bf 168} \\
\bottomrule
\end{tabular}
}
\caption{{\textbf{Median value of mean-ligands-to-target (MLT) over $100$ proteins (lower is better):}}
\ourmethod requires the fewest or nearly the fewest ligand tests to reach the target PICs in most cases.
}
\label{tbl:medianMLT}
\end{table*}

\subsection{Experimental Setup}
\label{sec:exp:setup}
We constructed a new 1.5M-entry ligand–protein interaction (LPI) dataset from PubChem, yielding $\sim$3.5M datapoints when combined with BindingDB and Davis datasets~\citep{Gilson2015BindingDBI2, Davis2011ComprehensiveAO}. We excluded compounds with PIC $<5$ across all proteins, as they provide little signal for learning. Consequently, our dataset is not directly comparable to high-throughput screening (HTS) hit rates, which include many uniformly inactive compounds and are designed for single-shot screening. In contrast, iterative screening methods like \ourmethod achieve higher hit rates~\citep{paricharak_analysis_2016}. Our curation provides a more informative and balanced benchmark for fair comparison across algorithms.

% We constructed a new 1.5M-entry ligand–protein interaction (LPI) dataset from PubChem. Combined with the BindingDB and Davis datasets, this yields approximately 3.5M LPI measurements. During dataset construction, we excluded compounds with PIC $< 5$ across all proteins, as they provide little informative signal for learning. As a result, our dataset is not directly comparable to high-throughput screening (HTS) hit rates, which include a large fraction of uniformly inactive compounds and are meant for single-shot screening.
% Iterative screening methods like \ourmethod have higher hit rates~\citep{paricharak_analysis_2016}.
% Instead, our curation is designed to create a more informative and balanced evaluation setting, enabling fair and meaningful comparisons between algorithms.

We represented ligands as $2,048$-dimensional ECFP, $167$-dimensional MACCS, and $600$-dimensional ChemBERTa embeddings~\cite{rogers_extended-connectivity_2010, durant_reoptimization_2002, chithrananda2020chemberta}.
We use ECFP as our primary molecular representation throughout
the main experiments, since it performs best on downstream property-prediction tasks~\cite{praski2025benchmarking}.

We simulated early-stage drug discovery on $100$ proteins with the most data.
For each protein, we ran the following experiment.
Let $A$ be all the ligands associated with that protein.
In other words, our dataset contains the PICs for all ligands in $A$.
We keep these PICs hidden from the algorithm until it specifically tests for ligands.
In the first iteration, \ourmethod randomly selects $b$ ligands from $A$.
The PICs of these selected ligands are revealed to \ourmethod.
With this information, \ourmethod selects another $b$ ligands (not at random).
These are tested in the next iteration, and so on.
In this way, each iteration represents a simulated DMTA cycle consisting of $b$ ligand tests (we use $b=10$, but other values yield similar results).

The iterations end when the top-$10$ ligands found so far have an average PIC $\geq t$ for $t\in\{7.5, 8, 8.5, 9\}$.
Alternatively, we stop once the top-$3$ ligands each have PIC $\geq t$.
We call these two endpoints {\em average top-10} and {\em min top-3} respectively.
We note that ligands with PIC$\geq 9.5$ are too rare ($<0.07\%$ of the ligands) to get reliable results.

\begin{table*}[tbp]
\centering
{\small
\begin{tabular}{c|ccccc|ccccc}
\toprule
\textbf{How often} & \multicolumn{5}{c|}{\textbf{Average (Top-10)}} & \multicolumn{5}{c}{\textbf{Min (Top-3)}} \\\cline{2-11}
\textbf{is a method} & \multicolumn{5}{c|}{\textbf{Target PIC}}  & \multicolumn{5}{c}{\textbf{Target PIC}}\\
\textbf{significantly better?} & 7.0 & 7.5 & 8.0 & 8.5 & 9.0 & 7.0 & 7.5 & 8.0 & 8.5 & 9.0 \\
\midrule\midrule
\ourmethod 	& {\bf 95\% } 	& {\bf 94\% } 	& {\bf 91\% } 	& {\bf 68\% } 	& {\bf 24\%} 	& {\bf 20\%} 	& {\bf 61\% } 	& {\bf 80\% } 	& {\bf 75\% } 	& {\bf 34\%} \\
vs. XGBoost 	& 1\% 	& 1\% 	& 2\% 	& 1\% 	& 13\% 	& 0\% 	& 1\% 	& 0\% 	& 1\% 	& 2\% \\
% Neither 	& 4\% 	& 5\% 	& 7\% 	& 31\% 	& 63\% 	& 80\% 	& 38\% 	& 20\% 	& 24\% 	& 64\% \\
\midrule
\ourmethod 	& {\bf 18\%} 	& {\bf 33\%} 	& {\bf 60\% } 	& {\bf 54\% } 	& 41\% 	& 0\% 	& 3\% 	& {\bf 7\%} 	& {\bf 27\%} 	& {\bf 33\%} \\
vs. MLP 	& 3\% 	& 5\% 	& 6\% 	& 6\% 	& 15\% 	& 0\% 	& 3\% 	& 2\% 	& 5\% 	& 10\% \\
%Neither 	& 79\% 	& 62\% 	& 34\% 	& 40\% 	& 44\% 	& 100\% 	& 94\% 	& 91\% 	& 68\% 	& 57\% \\
\midrule
\ourmethod 	& {\bf 78\% } 	& {\bf 79\% } 	& {\bf 69\% } 	& {\bf 43\%} 	& {\bf 28\%} 	& {\bf 2\%} 	& {\bf 18\%} 	& {\bf 36\%} 	& {\bf 28\%} 	& {\bf 24\%} \\
vs. XGBRegressor 	& 1\% 	& 0\% 	& 2\% 	& 4\% 	& 10\% 	& 0\% 	& 0\% 	& 1\% 	& 2\% 	& 7\% \\
%Neither 	& 21\% 	& 21\% 	& 29\% 	& 53\% 	& 62\% 	& 98\% 	& 82\% 	& 63\% 	& 70\% 	& 69\% \\
\midrule
%\vspace{0.2em}

\ourmethod & {\bf 23\%} & {\bf 43\%} & {\bf 46\%} & {\bf 56\%} & {\bf 56\%} & {\bf 3\%} & {\bf 6\%} & {\bf 10\%} & {\bf 30\%} & {\bf 51\%}\\
vs TabM & 4\% & 2\% & 2\% & 2\% & 4\% & 1\% & 2\% & 1\% & 2\% & 1\%\\
\midrule

\ourmethod & {\bf 9\%} & {\bf 16\%} & {\bf 17\%} & {\bf 44\%} & {\bf 43\%} & {\bf 2\%} & {\bf 7\%} & {\bf 8\%} & {\bf 24\%} & {\bf 34\%}\\
vs. GP-M & 8\% & 5\% & 3\% & 6\% & 6\% & 1\% & 1\% & 1\% & 2\% & 3\%\\
\midrule

\ourmethod & 6\% & {\bf 12\%} & {\bf 13\%} & {\bf 26\%} & {\bf 32\%} &  1\% & {\bf 5\%} & {\bf 6\%} & {\bf 13\%} & {\bf 22\%}\\
vs GP-UCB & {\bf 8\%} & 8\% & 11\% & 8\% & 9\% & {\bf 2\%} & 1\% & 1\% & 2\% & 4\%\\
\midrule

\ourmethod & {\bf 54\%} & {\bf 59\%} & {\bf 42\%} & {\bf 23\%} & 15\% & 2\% & {\bf 10\%} & {\bf 16\%} & {\bf 17\%} & 11\%\\
vs GP-EI & 2\% & 3\% & 7\% & 8\% & {\bf 22\%} & 2\% & 3\% & 2\% & 5\% & {\bf 15\%}\\
\midrule

\ourmethod & 2\% & {\bf 9\%} & {\bf 13\%} & {\bf 18\%} & 21\% & 1\% & {\bf 2\%} & {\bf 6\%} & {\bf 13\%} &  13\%\\
vs GP-PI & {\bf 9\%} &  7\% &  9\% &  9\% &  {\bf 23\%} &  {\bf 2\%} &  1\% &  3\% &  2\% &  13\%\\

\midrule
\midrule
{\ourmethod (MACCS)} & {\bf 8\%} & {\bf 9\%} & {\bf 12\%} & {\bf 14\%} & {\bf 24\%} & 1\% & 2\% & 2\% & {\bf 8\%} & {\bf 18\%}\\
vs TabPFN (MACCS) & 5\% & 5\% & 8\% & 9\% & 4\% & 1\% & 2\% & 2\% & 4\% & 0\%\\
\midrule

{\ourmethod (ECFP)} & {\bf 42\%} & {\bf 46\%} & {\bf 49\%} & {\bf 45\%} & {\bf 43\%} & 1\% & {\bf 8\%} & {\bf 11\%} & {\bf 28\%} & {\bf 36\%}\\
vs TabPFN (MACCS) & 2\% & 1\% & 1\% & 3\% & 1\% & 1\% & 1\% & 1\% & 3\% & 1\%\\

\bottomrule
\end{tabular}
}
\caption{{\textbf{ Head-to-head comparisons:}}
%For each protein, we check whether one method reaches the endpoint earlier than the other method does, significantly more often than by chance.
We report the percentage of proteins for which one method reaches the endpoint earlier than the other method, significantly more often than by chance.
%is reliably better than the other.
%Since TabPFN does not run for ECFP, we compare it against \ourmethod on the MACCS embedding.
TabPFN runs only with MACCS, not ECFP, so we compare against \ourmethod run with both embeddings.
For our main target PICs of $8$ and $8.5$, \ourmethod is reliably better for more proteins than any other method.
}
\label{tbl:H2HtargetPICfreq}
\vspace{-0.5em}
\end{table*}

\textbf{Metrics:}
For each experiment, we calculated the $\text{\bf ligands-to-target}(t)$, which denotes the number of ligand tests needed to hit an endpoint at PIC $t$.
If the endpoints are not reached within $400$ ligand tests, we declare failure and set the ligands-to-target$=400$.
To remove the effect of the random initialization in the first iteration, we averaged this value over $50$ repetitions.
We call the result mean-ligands-to-target, or $\text{MLT}(t)$.
Smaller values of $\text{MLT}(t)$ imply faster drug discovery.

\textbf{Competing methods:}
We consider the following types of competing methods.

{\bf (a) Bayesian optimization:} 
We use a Gaussian Process (GP) with a Tanimoto kernel, which has been found to work best for molecules~\cite{griffiths2024gauche}.
The GP gives mean and variance estimates for our current belief about each ligand's PIC.
In each DMTA cycle, we select ligands with the highest mean (GP-M), or mean plus standard deviation (GP-UCB), or the expected improvement in PIC (GP-EI), or the probability of improvement (GP-PI).

\noindent
{\bf (b) Deep learning:}
Since we have limited training data, we consider two recent state-of-the-art methods for such settings, namely TabM~\cite{gorishniy_tabm_2025} and TabPFN~\cite{hollmann2025tabpfn}.

\noindent
{\bf (c) Standard tools:}
We also tested XGBoost, Multilayer Perceptrons (MLP), and XGBRegressor.
As a control, we added a method, named Random, that randomly selects ligands in each DMTA cycle.

\subsection{Comparisons Between Methods}
\label{sec:exp:comparison}

% We first plot the ligands-to-target for five proteins that exemplify our overall results
% Figure~\ref{fig:best5}.
\paragraph{Overall comparison of mean ligands-to-target (MLT):}
Table~\ref{tbl:medianMLT} shows, for each method, the median MLT over $100$ proteins.
%We observe the following:
{\bf \ourmethod is the fastest, or nearly the fastest, to the target PIC in almost all cases.}
%Thus, aggregated over all $100$ proteins, \ourmethod outperforms all other methods.
\ourmethod needs only $\mathbf{29-40}$ ligand tests to reach a PIC of $\mathbf{8}$, for both the {\em average top-10} and {\em min top-3} endpoints.
For a target PIC of $8.5$, we only need $71-92$ ligand tests.

%As a result, the \emph{race-to-8} concludes before \ourmethod needs to gather much training data from ligand tests.
    
% We note that TabPFN did not run with the 2048-dimensional ECFP ligand embeddings, so we use the MACCS embeddings. All other methods used the ECFP embeddings.

% \begin{table*}
% \centering

% \label{tab:tabpfn_matched}
% \small
% \begin{tabular}{lccccc}
% \toprule
% Target PIC              & 7.0 & 7.5 & 8.0 & 8.5 & 9.0 \\
% \midrule
% \multicolumn{6}{l}{\emph{Matched: both methods use MACCS}} \\
% SPADE (MACCS) is better  & \textbf{8\%}  & \textbf{9\%}  & \textbf{12\%} & \textbf{14\%} & \textbf{24\%} \\
% TabPFN (MACCS) is better & 5\%  & 5\%  & 8\%  & 9\%  & 4\% \\
% \midrule
% \multicolumn{6}{l}{\emph{Mismatched: SPADE uses ECFP, TabPFN uses MACCS}} \\
% SPADE (ECFP) is better   & \textbf{42\%} & \textbf{46\%} & \textbf{49\%} & \textbf{45\%} & \textbf{43\%} \\
% TabPFN (MACCS) is better & 2\%  & 1\%  & 1\%  & 3\%  & 1\% \\
% \bottomrule
% \end{tabular}
% \caption{\txtred{SPADE vs.\ TabPFN head-to-head under matched (both MACCS) and
% mismatched (SPADE with ECFP, TabPFN with MACCS) embedding conditions. Even
% when both methods are restricted to MACCS (matched), SPADE outperforms TabPFN
% across all target PICs.}}
% \end{table*}

% Our robust classifier is key in \ourmethod's outperformance in such data-sparse settings.

\paragraph{Head-to-head comparisons:}
%Next, we directly compared \ourmethod with competing methods.
For each protein, we tracked which method reached the endpoint first over the 50 trials.
If two methods are similar, each should finish first about equally often.
A method is reliably better for a protein if it finishes first significantly more than half the time ($p<0.1$; results are similar for $p<0.05$).
Table~\ref{tbl:H2HtargetPICfreq} shows the fraction of proteins where \ourmethod or a competitor is reliably better.
% No single method dominates across all proteins.
% Between any two methods, we choose the one that is reliably better for more proteins.
% By this measure, 
{\bf \ourmethod beats every competitor for the main target PICs of 8 and 8.5.}
GP-PI is the closest competitor, but \ourmethod is 10x faster (as shown later). 

Table~\ref{tbl:percentImprov} compares the MLT for proteins where one method is reliably better than another.
% We now focus exclusively on proteins where one method is reliably better than another.
% For these proteins, we compare the mean-ligands-to-target (MLT) of the two methods (Table~\ref{tbl:percentImprov}).
For the {\em average top-10} metric with target PICs $8$ or $8.5$, \ourmethod outperforms Bayesian optimization by $8\%-31\%$, deep learning by $7\%-32\%$, and standard classifiers and regressors by $13\%-25\%$.
Thus, {\bf \ourmethod needs $\mathbf{7\%-32\%}$ fewer ligand tests than its competitors.}

% \paragraph{Need for robustness:}
% Among the top methods, the biggest differences are for PICs 8 and 8.5.
% For lower PICs, all methods reach the endpoints quickly, and any differences are washed out by randomness.
% For higher PICs, all methods have enough data to select good ligands, so the gaps shrink.
% However, our focus is on the race-to-8.
% Here, \ourmethod's robustness helps it dominate.

\begin{table*}[tbp]
\centering
{\footnotesize
\begin{tabular}{c|ccccc|ccccc}
\toprule
 {\bf Median} & \multicolumn{5}{c|}{{\bf Average (Top-10)}} & \multicolumn{5}{c}{{\bf Min (Top-3)}}\\\cline{2-11}
 {\bf improvement of} & \multicolumn{5}{c|}{\textbf{Target PIC}}  & \multicolumn{5}{c}{\textbf{Target PIC}}\\
{\bf \ourmethod over} & 7.0 & 7.5 & 8.0 & 8.5 & 9.0 & 7.0 & 7.5 & 8.0 & 8.5 & 9.0 \\
\midrule\midrule

XGBoost 	& 18\% & 24\% & 25\% & 16\% & 3\% & 22\% & 21\% & 23\% & 16\% & 14\% \\
MLP 	& 6\% &  10\% &  18\% &  22\% &  12\% &  0\% &  8\% & 17\% &  18\% &  22\% \\
XGBRegressor 	& 7\% &  11\% &  13\% &  18\% &  11\% &  10\% &  19\% &  13\% &  20\% &  24\% \\
\midrule
TabPFN &  4\% &  13\% &  7\% &  12\% &  8\% &  -9\% &  -15\% &  -19\% &  6\% &  12\% \\
TabM &  4\% &  17\% &  32\% &  32\% &  19\% &  42\% &  43\% &  39\% &  40\% &  27\% \\
\midrule
GP-M &  2\% &  26\% &  20\% &  31\% &  12\% &  45\% &  45\% &  19\% &  38\% &  28\% \\
GP-UCB &  0\% &  14\% &  16\% &  24\% &  11\% &  -8\% &  44\% &  17\% &  28\% &  26\% \\
GP-EI &  11\% &  13\% &  8\% &  13\% &  -7\% &  -14\% &  14\% &  14\% &  12\% &  -1\% \\
GP-PI &  -5\% &  12\% &  8\% &  17\% &  -2\% &  -59\% &  43\% &  22\% &  19\% &  4\% \\

\bottomrule
\end{tabular}
}
\caption{
{\textbf {Percent lift of \ourmethod over competing methods (higher is better):}}
For proteins where one method is reliably better, we compute the improvement in \ourmethod's MLT over the competing method. 
\ourmethod has a median improvement of $8\%-32\%$ over competing methods for our primary target PICs of $8$ and $8.5$ under the {\em average top-10} metric.
}
\label{tbl:percentImprov}
\vspace{-1em}
\end{table*}
\begin{figure*}
\centering
        \begin{subfigure}{0.3\textwidth}
            \centering
            \includegraphics[width=\textwidth]{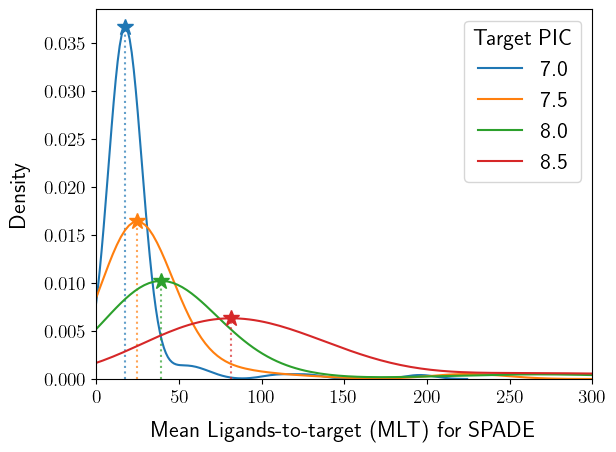}
            \caption{\ourmethod's MLT distribution}
            \label{fig:detail:ourtimeplot}
        \end{subfigure}
    \hfill
        \begin{subfigure}{0.68\textwidth}
            \centering
            \includegraphics[width=\textwidth]{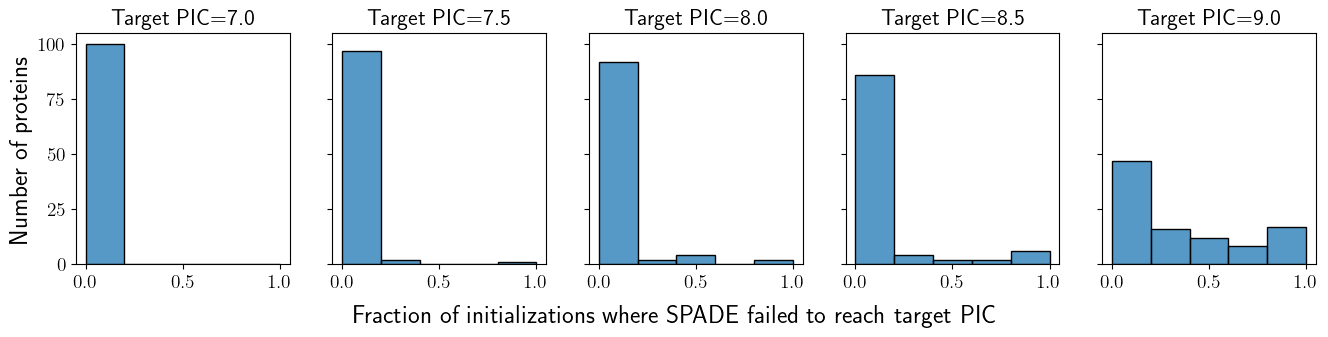}\vspace{-0.5em}
            \caption{Out-of-reach target PICs}
            
            \label{fig:detail:wefailed} 
        \end{subfigure}

  \caption{{\textbf{ Detailed analysis of \ourmethod's performance:}}
(a) As the target PIC increases, the distribution of \ourmethod's mean ligands-to-target (MLT) shifts to the right and has higher variance.
%as the target PIC increases, since we need more ligand tests to achieve the target.
%over $100$ proteins using the {\em average top-10} metric.
%As the target PIC increases, the MLT distribution shifts to the right, since we need more ligand tests to achieve the target.
%The distribution also widens due to correlations among the selected ligands over the DMTA cycles.
(b)~\ourmethod's failures to reach a PIC occur most for target PIC$=9$, which are very rare (less than $0.5\%$ of the ligand for the median protein).
Detailed explanations are in the text.
% Since we stop after $400$ ligand tests, we sometimes fail to hit a target PIC$\geq 9$.
    }
    \label{fig:detail}
    \vspace{-1em}
\end{figure*}
\begin{figure*}[tbp]
    \centering
    \includegraphics[width=\textwidth]{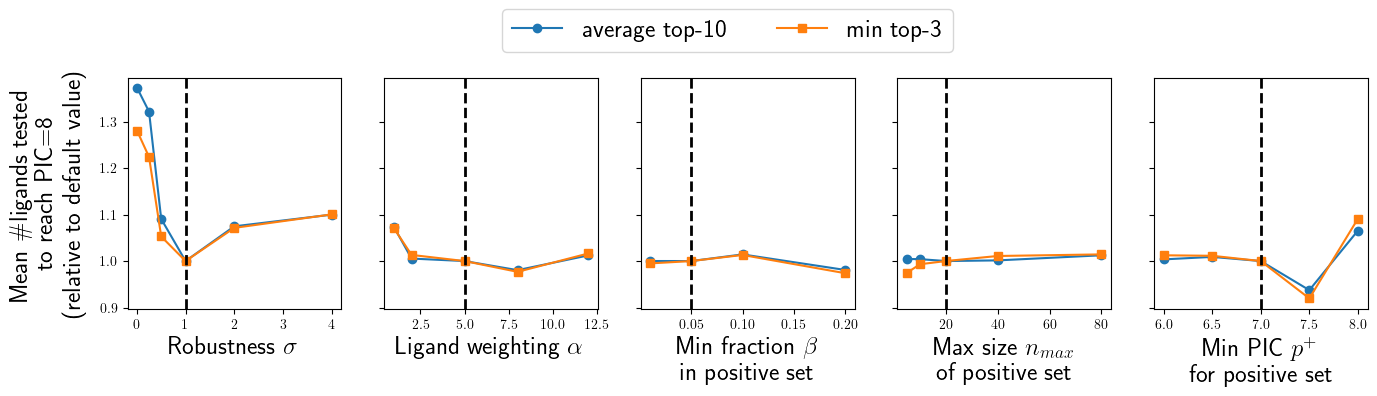}
    \caption{{\textbf{Sensitivity analysis:}} \ourmethod's performance is only sensitive to the robustness parameter $\sigma$ used in Equation~\ref{eq:loss}.
    }
    \label{fig:sensitivity}
    \vspace{-1em}
\end{figure*}

\begin{table}[t]
\centering
\footnotesize
\begin{subtable}{0.28\textwidth}
\centering
%\scriptsize
% \begin{tabular}{c|ccccc}
%     \multicolumn{6}{c}{{\bf $\Delta$-MLT from MACCS ($d=167$) to ECFP ($d=2048$)}}\\
%     \toprule
%         & \multicolumn{5}{c}{{\bf Target PIC}} \\
% {\bf Metric}   & 7.0 & 7.5 & 8.0 & 8.5 & 9.0\\
% \toprule
% {\em Average top-10} & 5\% & 12\% & 19\% & 29\% & 5\% \\
% {\em Min top-3}      & 0\% & 11\% & 17\% & 28\% & 25\% \\
% \bottomrule
% \end{tabular}
% \caption{Sensitivity to embedding dimension.}
% \label{tbl:embedding}

\begin{tabular}{lr}
\toprule
\textbf{Method} & \textbf{Time (s)} \\
\midrule
\textbf{\ourmethod}   & $\mathbf{4\pm0.1}$ \\
GP-(any)              & $39\pm0.2$  \\
TabM                  & $97\pm 8.6$ \\
TabPFN                & $6096\pm548$ \\
\bottomrule
\end{tabular}
\caption{Time to score $10^6$ ligands}
\label{tbl:time}
\end{subtable}
\hfill
\begin{subtable}{0.35\textwidth}
\centering
%\scriptsize
\begin{tabular}{lc}
\toprule
\textbf{\# ligand tests in} & \textbf{Relative diff.}\\
\textbf{a DMTA cycle ($b$)} & \textbf{from default} \\
\midrule
$b = 5$   & $0.3\% \pm 1.05\%$ \\
$b = 10$  & (default) \\
$b = 20$  & $0.4\% \pm 1.8\%$ \\
\bottomrule
\end{tabular}
\caption{Sensitivity to batch size}
\label{tab:k_sensitivity}
\end{subtable}
\hfill
\begin{subtable}{0.32\textwidth}
\centering
%\scriptsize
\begin{tabular}{lc}
\toprule
\textbf{Condition} & \textbf{Effect} \\
\midrule
No robustness  & $37\%$ worse MLT \\
No exp.\ weight & $10\%$ worse MLT \\
No 10-ligand limit & PIC diff. $<0.04$ \\
\bottomrule
\end{tabular}
\caption{Ablation study}
\label{tbl:ablation}
\end{subtable}

\caption{\textbf{Summary of additional analyses for \ourmethod.}}
\label{tab:combined_three}

\end{table}

\subsection{Detailed Analysis of \ourmethod}
\label{sec:exp:detailed}

\textbf{Fast ligand discovery, with wider tails for harder targets:}
Figure~\ref{fig:detail:ourtimeplot} shows that \ourmethod's MLT distribution shifts to the right as the target PIC increases, since we need more ligand tests to achieve the target.
The variance also increases for the higher PICs, due to correlated ligand selection across the DMTA cycles.
The effect of such correlations compounds as the number of DMTA cycles increases.
Since we need more cycles for higher target PICs, we see higher variance for them.

\textbf{\ourmethod only fails when the target PICs are especially rare:}
% An initialization fails if \ourmethod does not reach the target within $400$ ligand tests.
For each protein, we count the fraction of times \ourmethod does not reach the target within $400$ ligand tests.
Figure~\ref{fig:detail:wefailed} shows the histogram of this fraction across all proteins.
Almost all observed failures occur when the target PIC is $9$, which is extremely rare in our data ($<0.5\%$ of ligands).
For the target PIC of $8.5$, failures only occur for proteins where fewer than $0.1\%$ of the ligands have PIC$\geq 8.5$ (the typical rate  is $2.7\%$). GP-EI and GP-PI fail at the same rate as SPADE, and GP-UCB, GP-M, and TabM fail at about 45\% higher rate.

\paragraph{Wall-clock time:}
%Finally, we compare methods based on their computational requirements.
The main bottleneck is scoring millions of untested ligands to select the best ones for the next DMTA cycle.
Table~\ref{tbl:time} shows the wall-clock times for scoring 1{,}000 ligands.
{\bf \ourmethod is 10x faster} than its closest competitor (GP-PI).

\begin{table}[t]
\centering
\footnotesize
\setlength{\tabcolsep}{4pt}
\begin{tabular}{l|ccccc|ccccc}
\toprule
 & \multicolumn{5}{c|}{\textbf{ChemBERTa (600 dim.)}} & \multicolumn{5}{c}{\textbf{MACCS (167 dim.)}} \\
 & \multicolumn{5}{c|}{Target PIC} & \multicolumn{5}{c}{Target PIC} \\
\textbf{Method} & 7.0 & 7.5 & 8.0 & 8.5 & 9.0 & 7.0 & 7.5 & 8.0 & 8.5 & 9.0 \\
\midrule
\ourmethod\ is better & 11\% & 16\% & \textbf{17\%} & \textbf{20\%} & 14\%
                      & \textbf{10\%} & \textbf{10\%} & \textbf{16\%} & 13\% & 14\% \\
GP-PI is better       & 2\%  & 5\%  & 8\%  & 15\% & \textbf{29\%}
                      & 3\%  & 6\%  & 8\%  & \textbf{16\%} & \textbf{26\%} \\
\bottomrule
\end{tabular}
\vspace{1em}
\caption{\textbf{\ourmethod\ vs.\ GP-PI under ChemBERTa and MACCS.}
Percentage of proteins where each method is significantly better ($p<0.1$).
\ourmethod\ dominates at lower-to-mid target PICs and is comparable or better at PIC 8.5.}
\label{tab:combined_stacked}
\vspace{-1em}
\end{table}

% \begin{table}[t]
% \centering
% \small
% \begin{tabular}{lc}
% \toprule
% Batch size $k$ & Relative difference in avg.\ top-10 PIC vs.\ $k{=}10$ \\
% \midrule
% $k = 5$   & $0.3\% \pm 1.05\%$ \\
% $k = 10$  & - (reference) \\
% $k = 20$  & $0.4\% \pm 1.8\%$ \\
% \bottomrule
% \end{tabular}
% \vspace{1em}
% \caption{{\textbf {Sensitivity of \ourmethod to the DMTA batch size $k$}}}
% \label{tab:k_sensitivity}
% \end{table}

%\vspace{-1em}
\subsection{Sensitivity Analysis and Ablation Study}
\label{sec:exp:sensitivity}

% This emphasizes the importance of robustness in the {\em race-to-8} problem.

\textbf{Ligand embedding:}
We repeated our experiments using the MACCS and ChemBERTa ligand embeddings.
Both are lower-dimensional than the ECFP embedding.
Table~\ref{tab:combined_stacked} compares \ourmethod\ against GP-PI,
its closest competitor, under the ChemBERTa and MACCS embeddings.
At our primary target PIC of 8,
\ourmethod\ is reliably better than GP-PI on roughly twice as many
proteins under both embeddings. (17\% vs.\ 8\% for ChemBERTa; 16\%
vs.\ 8\% for MACCS).
At PIC 8.5, \ourmethod\ is also comparable or better.
Thus, \ourmethod consistently outperforms other methods in the race-to-8, for all embeddings.

\textbf{Model hyper-parameters:} 
Figure~\ref{fig:sensitivity} shows how \ourmethod's average MLT changes as we vary its hyperparameters.
All results are for our primary target PIC of $8$, and are normalized with respect to the default hyperparameter settings.
Only the robustness parameter $\sigma$ significantly affects performance (Equation~\ref{eq:loss}).
We note that we use the same default value of $\sigma = 1$ for all embeddings (ECFP, MACCS, and ChemBERTa) and all proteins, without per-protein retuning.
{\em \ourmethod does not need cross-validation.}
Since we need fewer than 40 total tests before
reaching target PIC=8, any cross-validation based selection strategy would be
dominated by sampling noise in this low-data setting.

\textbf{Number of ligand tests in each DMTA cycle:}
Table~\ref{tab:k_sensitivity} shows the relative difference in \ourmethod's performance as we vary the number of ligands \txtred{$b$} tested per DMTA cycle.
The average top-10 PIC found by \ourmethod varies only slightly with \txtred{$b$}, confirming
that the method is robust to this hyperparameter. Values are
mean~$\pm$~standard deviation of the relative difference vs.\ \txtred{$b{=}10$}.

% \subsection{Ablation Study}
% \label{sec:exp:ablation}
\textbf{Ablation Study:}
We removed three aspects of \ourmethod: its robustness, the exponential weighting scheme, and the condition that any ligand can be used to help select at most $10$ other ligands.
These correspond to Steps~\ref{alg:ourmethod:Ci} and~\ref{alg:ourmethod:score} of Algorithm~\ref{alg:ourmethod}, and the implementation details in Section~\ref{sec:prop}.
Table~\ref{tbl:ablation} shows that the first two aspects are important to \ourmethod's performance, while the last one is less important.

% \medskip\noindent
% \txtred{{\em Default value of $\sigma$ and its selection:}
% The robustness hyperparameter $\sigma$ acts as a regularizer that controls the
% spread of the Gaussian distribution around each positive sample. To select a
% default value, we tuned $\sigma$ on approximately 10 proteins held out from
% our 100-protein benchmark and selected $\sigma = 1$ for ECFP embeddings. All
% reported results are obtained on the remaining 90 proteins with this fixed
% default. Our sensitivity analysis (Figure~\ref{fig:sensitivity}) further
% confirms that performance under $\sigma = 0.5$ is within 5\% of the $\sigma =
% 1$ default, and that $\sigma = 1$ generalizes across both ECFP and MACCS
% embeddings without per-protein retuning. We do not use a data-driven choice
% for $\sigma$ because in our regime (typically fewer than 40 total tests before
% reaching target PIC=8), any cross-validation based selection strategy would be
% dominated by sampling noise.}

\textbf{Filtering ligands using data for similar proteins:}
Finally, we simulated an experiment where we know of proteins similar to the target protein.
Here, the data from similar proteins is used to filter the set of available ligands before running \ourmethod.
This improves MLT by $18\%-31\%$ (Table~\ref{tab:filter} in Appendix~\ref{sec:app:expdetails}).

\section{Conclusions}
\label{sec:conc}

We set out to rapidly identify ligands with PIC$\geq 8$ for a specific target protein.
These high-quality ligands, however, are the proverbial needles in the vast haystack of all possible ligands.
We also assume no prior knowledge about the protein.
Remarkably, just 40 or so ligand tests suffice for \ourmethod to discover 10 high-quality ligands in this challenging setting.
\ourmethod needs to test $7\%-32\%$ fewer ligands than its competitors.
Moreover, it is also three times faster at scoring ligands than its closest competitor.
This translates to significant cost savings for early-stage drug discovery.

\ourmethod succeeds for two reasons.
First, it focuses only on improving the top-$k$ ligands seen so far.
It does not estimate the PIC distribution across all ligands.
Second, \ourmethod is designed to be robust.
Specifically, in optimizing the model parameters, we minimize the expected loss over a broad distribution rather than the empirical loss on just $k$ top ligands.
This approach helps \ourmethod learn reliable signals from such limited and imbalanced data.

% There are several avenues for future work.
% One direction is to leverage prior knowledge gained from other proteins.
% Such side information could be used to filter the ligand search space before running \ourmethod.
% Another possible extension is to use \ourmethod to guide ligand generative models~\cite{bengio21flow} towards high-PIC ligands.

% For instance, existing methods often aim to generate diverse ligands.
% One could adapt them to generate new ligands likely to have high PICs using \ourmethod.

\newpage
\bibliographystyle{plainnat}
\bibliography{bibliography}

@inproceedings{chakrabarti_robust_2022,
	address = {Vienna, Austria},
	title = {Robust {High}-{Dimensional} {Classification} {From} {Few} {Positive} {Examples}},
	isbn = {978-1-956792-00-3},
	url = {https://www.ijcai.org/proceedings/2022/271},
	doi = {10.24963/ijcai.2022/271},
	language = {en},
	urldate = {2024-04-14},
	booktitle = {Proceedings of the {Thirty}-{First} {International} {Joint} {Conference} on {Artificial} {Intelligence}},
	publisher = {International Joint Conferences on Artificial Intelligence Organization},
	author = {Chakrabarti, Deepayan and Fauber, Benjamin},
	month = jul,
	year = {2022},
	pages = {1952--1958},
}

@article{Swinney2011HowWN,
  title={{How Were New Medicines Discovered?}},
  author={David C. Swinney and Jason Anthony},
  journal={Nat. Rev. Drug Discov.},
  year={2011},
  volume={10},
  pages={507-519},
}

@article{Waring2015AnAO,
  title={{An Analysis of the Attrition of Drug Candidates from Four Major Pharmaceutical Companies}},
  author={Michael J. Waring and John Edmund Arrowsmith and Andrew R. Leach and Paul D. Leeson and Sam Mandrell and Robert M. Owen and Garry Pairaudeau and William D. Pennie and Stephen D. Pickett and Jibo Wang and Owen Wallace and Alexander Weir},
  journal={Nat. Rev. Drug Discov.},
  year={2015},
  volume={14},
  pages={475-486},
}

@article{Sadybekov2021SynthonbasedLD,
  title={{Synthon-Based Ligand Discovery in Virtual Libraries of over 11 Billion Compounds}},
  author={Arman A. Sadybekov and Anastasiia V. Sadybekov and Yongfeng Liu and Christos Iliopoulos-Tsoutsouvas and Xi-Ping Huang and Julie E. Pickett and Blake Houser and Nilkanth Patel and Ngan K. Tran and Fei Tong and Nikolai Zvonok and M. K. Jain and Olena V. Savych and Dmytro S. Radchenko and Spyros P. Nikas and Nicos A. Petasis and Yurii S. Moroz and Bryan L. Roth and Alexandros Makriyannis and Vsevolod Katritch},
  journal={Nature},
  year={2021},
  volume={601},
  pages={452 - 459},
}

@article{Yamanishi2008PredictionOD,
  title={{Prediction of Drug–Target Interaction Networks from the Integration of Chemical and Genomic Spaces}},
  author={Yoshihiro Yamanishi and Michihiro Araki and Alex Gutteridge and Wataru Honda and Minoru Kanehisa},
  journal={Bioinformatics},
  year={2008},
  volume={24},
  pages={i232 - i240},
}

@article{Knowles2003NRDD,
    author = {Jonathan Knowles and Gianni Gromo},
    title = {{Target Selection in Drug Discovery}},
    journal = {Nat. Rev. Drug Discov.},
    year = {2003},
    volume={2},
    pages={63-69},
}

@article{FauberLPI2024,
    author = {Ben Fauber},
    title = {{Accurate Prediction of Ligand-Protein Interaction Affinities with Fine-Tuned Small Language Models}},
    journal = {ArXiv},
    year = {2024},
    volume = {abs/2407.00111v1},
}

@article{PLOWRIGHT201256,
    title = {{Hypothesis Driven Drug Design: Improving Quality and Effectiveness of the Design-Make-Test-Analyse Cycle}},
    journal = {Drug Discov. Today},
    volume = {17},
    number = {1},
    pages = {56-62},
    year = {2012},
    author = {Alleyn T. Plowright and Craig Johnstone and Jan Kihlberg and Jonas Pettersson and Graeme Robb and Richard A. Thompson},
}

@article{LombardinoLoweNRDD2004,
    author = {Joseph G. Lombardino and John A. Lowe III},
    title = {{The Role of the Medicinal Chemist in Drug Discovery — Then and Now}},
    journal = {Nat. Rev. Drug Discov.},
    year = {2004},
    volume = {3},
    pages = {853-862},
}

@article{Oliveira2024InferringMI,
  title={{Inferring Molecular Inhibition Potency with AlphaFold Predicted Structures}},
  author={Pedro F. Oliveira and Rita C Guedes and Andre O Falcao},
  journal={Sci. Rep.},
  year={2024},
  volume={14},
  pages = {8252},
}

@article{Kimber2021DeepLI,
  title={{Deep Learning in Virtual Screening: Recent Applications and Developments}},
  author={Talia B. Kimber and Yonghui Chen and Andrea Volkamer},
  journal={Int. J. Mol. Sci.},
  year={2021},
  volume={22},
  pages={4435},
}

@article{MartinAllAssayMax2QSAR2019,
  author = {Martin, Eric J. and Polyakov, Valery R. and Zhu, Xiang-Wei and Tian, Li and Mukherjee, Prasenjit and Liu, Xin},
  title = {{All-Assay-Max2 pQSAR: Activity Predictions as Accurate as Four-Concentration IC50s for 8558 Novartis Assays}},
  journal = {J. Chem. Inf. Model.},
  volume = {59},
  number = {10},
  pages = {4450-4459},
  year = {2019},
}

@article{Mayr2018LargescaleCO,
  title={{Large-Scale Comparison of Machine Learning Methods for Drug Target Prediction on ChEMBL}},
  author={Andreas Mayr and G{\"u}nter Klambauer and Thomas Unterthiner and Marvin N. Steijaert and J{\"o}rg Kurt Wegner and Hugo Ceulemans and Djork-Arn{\'e} Clevert and Sepp Hochreiter},
  journal={Chem. Sci.},
  year={2018},
  volume={9},
  pages={5441-5451},
}

@article{Martin2011ProfileQSARAN,
  title={{Profile-QSAR: A Novel meta-QSAR Method that Combines Activities across the Kinase Family To Accurately Predict Affinity, Selectivity, and Cellular Activity}},
  author={Eric J. Martin and Prasenjit Mukherjee and David C. Sullivan and Johanna M. Jansen},
  journal={J. Chem. Inf. Model.},
  year={2011},
  volume={51},
  number={8},
  pages={1942-1956},
}

@article{FaulonMisraDTIPred2007,
    author = {Jean-Loup Faulon and Milind Misra and Shawn Martin and Ken Sale and Rajat Sapra},
    title = {{Genome Scale Enzyme–Metabolite and Drug–Target Interaction Predictions Using the Signature Molecular Descriptor}},
    journal = {Bioinformatics},
    volume = {24},
    number = {2},
    pages = {225-233},
    year = {2007},
    issn = {1367-4803},
}

@article{Huang2020DeepPurposeAD,
  title={{DeepPurpose: A Deep Learning Library for Drug–Target Interaction Prediction}},
  author={Kexin Huang and Tianfan Fu and Lucas Glass and Marinka Zitnik and Cao Xiao and Jimeng Sun},
  journal={Bioinformatics},
  year={2020},
  volume={36},
  pages={5545 - 5547},
}

@article{Huang2020MolTransMI,
  title={{MolTrans: Molecular Interaction Transformer for Drug–Target Interaction Prediction}},
  author={Kexin Huang and Cao Xiao and Lucas Glass and Jimeng Sun},
  journal={Bioinformatics},
  year={2020},
  volume={37},
  pages={830 - 836},
}

@article{Li2020MONNAM,
  title={{MONN: A Multi-objective Neural Network for Predicting Compound-Protein Interactions and Affinities}},
  author={Shuya Li and Fangping Wan and Hantao Shu and Tao Jiang and Dan Zhao and Jianyang Zeng},
  journal={Cell Syst.},
  year={2020},
  volumne={10},
  pages={308-322.e11},
}

@article{ztrk2019WideDTAPO,
  title={{WideDTA: Prediction of Drug-Target Binding Affinity}},
  author={Hakime {\"O}zt{\"u}rk and Elif Ozkirimli Olmez and Arzucan {\"O}zg{\"u}r},
  journal={ArXiv},
  year={2019},
  volume={abs/1902.04166},
}

@article{Whitehead2019ImputationOA,
  title={{Imputation of Assay Bioactivity Data Using Deep Learning}},
  author={Thomas M. Whitehead and Benedict W J Irwin and Peter A. Hunt and Matthew D. Segall and Gareth John Conduit},
  journal={J. Chem. Inf. Model.},
  year={2019},
  volume={59},
  issue={3},
  pages={1197-1204},
}

@article{Lee2018DeepConvDTIPO,
    title={{DeepConv-DTI: Prediction of Drug-Target Interactions via Deep Learning with Convolution on Protein sequences}},
    author={Ingoo Lee and Jongsoo Keum and Hojung Nam},
    journal={PLOS Comput. Biol.},
    year = {2019},
    month = {06},
    volume = {15},
    number = {6},
    pages = {e1007129},
}

@article{beyondthehypeDL2017,
  title={{Beyond the Hype: Deep Neural Networks Outperform Established Methods Using a ChEMBL Bioactivity Benchmark Set}},
  author={Eelke B. Lenselink and Niels ten Dijke and Brandon Bongers and George Papadatos and Herman W. T. van Vlijmen and Wojtek Kowalczyk and Adriaan P. IJzerman and Gerard J. P. van Westen },
  journal={J. Cheminform.},
  year={2017},
  volume={9},
  pages={45},
}

@article{Wen2017DeepLearningBasedDI,
  title={{Deep-Learning-Based Drug-Target Interaction Prediction}},
  author={Ming Wen and Zhimin Zhang and Shaoyu Niu and Haozhi Sha and Rui Yang and Yong-Huan Yun and Hongmei Lu},
  journal={J. Proteome Res.},
  year={2017},
  volume={16},
  pages={1401-1409},
}

@article{Kalakoti2022TransDTITL,
  title={{TransDTI: Transformer-Based Language Models for Estimating DTIs and Building a Drug Recommendation Workflow}},
  author={Yogesh Kalakoti and Shashank Yadav and Durai Sundar},
  journal={ACS Omega},
  year={2022},
  volume={7},
  pages={2706 - 2717},
}

@article{gorantla24proteins,
  author = {Gorantla, Rohan and Kubincová, Alžbeta and Weiße, Andrea Y. and Mey, Antonia S. J. S.},
  title = {{From Proteins to Ligands: Decoding Deep Learning Methods for Binding Affinity Prediction}},
  journal = {J. Chem. Inf. Model.},
  volume = {64},
  number = {7},
  pages = {2496-2507},
  year = {2024},
}

@article{Svensson2024HyperPCMRT,
  title={{HyperPCM: Robust Task-Conditioned Modeling of Drug–Target Interactions}},
  author={Emma Svensson and Pieter-Jan Hoedt and Sepp Hochreiter and G{\"u}nter Klambauer},
  journal={J. Chem. Inf. Model.},
  year={2024},
  volume={64},
  pages={2539 - 2553},
}

@article{Chatterjee2023ImprovingTG,
  title={{Improving the Generalizability of Protein-Ligand Binding Predictions with AI-Bind}},
  author={Ayan Chatterjee and Robin Walters and Zohair Shafi and Omair Shafi Ahmed and Michael Sebek and Deisy Morselli Gysi and Rose Yu and Tina Eliassi-Rad and Albert-L{\'a}szl{\'o} Barab{\'a}si and Giulia Menichetti},
  journal={Nat. Commun.},
  year={2023},
  volume={14},
  pages={1989},
}

@article{Thafar2022Affinity2VecDB,
  title={{Affinity2Vec: Drug-Target Binding Affinity Prediction Through Representation Learning, Graph Mining, and Machine Learning}},
  author={Maha A. Thafar and Mona Alshahrani and Somayah Albaradei and Takashi Gojobori and Magbubah Essack and Xin Gao},
  journal={Sci. Rep.},
  year={2022},
  volume={12},
  pages={4751},
}

@article{Wang2015AccurateAR,
  title={{Accurate and Reliable Prediction of Relative Ligand Binding Potency in Prospective Drug Discovery by Way of a Modern Free-Energy Calculation Protocol and Force Field}},
  author={Lingle Wang and Yujie Wu and Yuqing Deng and Byungchan Kim and Levi Pierce and Goran Krilov and Dmitry Lupyan and Shaughnessy Robinson and Markus K. Dahlgren and Jeremy R. Greenwood and Donna Lee Romero and Craig E. Masse and Jennifer L. Knight and Thomas Steinbrecher and Thijs Beuming and Wolfgang Damm and Edward D Harder and Woody Sherman and Mark L. Brewer and Ron Wester and Mark A. Murcko and Leah L. Frye and Ramy Farid and Teng Lin and David L. Mobley and William L. Jorgensen and Bruce J. Berne and Richard A. Friesner and Robert Abel},
  journal={J. Am. Chem. Soc.},
  year={2015},
  volume={137},
  pages={2695-2703},
}

@article{Ross2023TheMA,
  title={{The Maximal and Current Accuracy of Rigorous Protein-Ligand Binding Free Energy Calculations}},
  author={Gregory A. Ross and Chao Lu and Guido Scarabelli and Steven K. Albanese and Evelyne Houang and Robert Abel and Edward D Harder and Lingle Wang},
  journal={Commun. Chem.},
  year={2023},
  volume={6},
  pages={222},
}

@article{doi:10.1021/acs.jcim.0c00900,
  author = {Christina E. M. Schindler and Hannah Baumann and Andreas Blum and Dietrich Böse and Hans-Peter Buchstaller and Lars Burgdorf and Daniel Cappel and Eugene Chekler and Paul Czodrowski and Dieter Dorsch and Merveille K. I. Eguida and Bruce Follows and Thomas Fuchß and Ulrich Grädler and Jakub Gunera and Theresa Johnson and Lebrun Catherine Jorand and Srinivasa Karra and Markus Klein and Tim Knehans and Lisa Koetzner and Mireille Krier and Matthias Leiendecker and Birgitta Leuthner and Liwei Li and Igor Mochalkin and Djordje Musil and Constantin Neagu and Friedrich Rippmann and Kai Schiemann and Robert Schulz and Thomas Steinbrecher and Eva-Maria Tanzer and Andrea Unzue Lopez and Follis Ariele Viacava and Ansgar Wegener and Daniel Kuhn},
  title = {{Large-Scale Assessment of Binding Free Energy Calculations in Active Drug Discovery Projects}},
  journal = {J. Chem. Inf. Model.},
  volume = {60},
  number = {11},
  pages = {5457-5474},
  year = {2020},
}

@article{hollmann2025tabpfn,
 title={Accurate predictions on small data with a tabular foundation model},
 author={Hollmann, Noah and M{\"u}ller, Samuel and Purucker, Lennart and
         Krishnakumar, Arjun and K{\"o}rfer, Max and Hoo, Shi Bin and
         Schirrmeister, Robin Tibor and Hutter, Frank},
 journal={Nature},
 year={2025},
 month={01},
 day={09},
 doi={10.1038/s41586-024-08328-6},
 publisher={Springer Nature},
 url={https://www.nature.com/articles/s41586-024-08328-6},
}

@article{griffiths2024gauche,
  title={{GAUCHE}: A library for {Gaussian} processes in chemistry},
  author={Griffiths, Ryan-Rhys and Klarner, Leo and Moss, Henry and Ravuri, Aditya and Truong, Sang and Du, Yuanqi and Stanton, Samuel and Tom, Gary and Rankovic, Bojana and Jamasb, Arian and others},
  journal={Advances in Neural Information Processing Systems},
  volume={36},
  year={2024}
}

@article{stein_property-unmatched_2021,
  title = {Property-{Unmatched} {Decoys} in {Docking} {Benchmarks}},
  volume = {61},
  copyright = {https://doi.org/10.15223/policy-029},
  issn = {1549-9596, 1549-960X},
  url = {https://pubs.acs.org/doi/10.1021/acs.jcim.0c00598},
  doi = {10.1021/acs.jcim.0c00598},
  language = {en},
  number = {2},
  urldate = {2026-01-18},
  journal = {Journal of Chemical Information and Modeling},
  author = {Stein, Reed M. and Yang, Ying and Balius, Trent E. and O’Meara, Matt J. and Lyu, Jiankun and Young, Jennifer and Tang, Khanh and Shoichet, Brian K. and Irwin, John J.},
  month = feb,
  year = {2021},
  pages = {699--714},
}

@article{gorishniy_tabm_2025,
  title = {{TABM}: {Advancing} {Tabular} {Deep} {Learning} {With} {Parameter}-{Efficient} {Ensembling}},
  language = {en},
  author = {Gorishniy, Yury and Kotelnikov, Akim and Babenko, Artem},
  year = {2025},
}

@article{Davis2011ComprehensiveAO,
  title={Comprehensive analysis of kinase inhibitor selectivity},
  author={Mindy I. Davis and Jeremy P Hunt and Sanna Herrg{\aa}rd and Pietro Ciceri and Lisa M. Wodicka and Gabriel Pallares and Michael Hocker and Daniel K. Treiber and Patrick P. Zarrinkar},
  journal={Nature Biotechnology},
  year={2011},
  volume={29},
  pages={1046-1051},
  url={https://api.semanticscholar.org/CorpusID:32070305}
}

@article{Gilson2015BindingDBI2,
  title={BindingDB in 2015: A public database for medicinal chemistry, computational chemistry and systems pharmacology},
  author={Michael K. Gilson and Tiqing Liu and Michael Baitaluk and George Nicola and Linda Hwang and Jenny Chong},
  journal={Nucleic Acids Research},
  year={2015},
  volume={44},
  pages={D1045 - D1053},
  url={https://api.semanticscholar.org/CorpusID:8843610}
}

@article{rogers_extended-connectivity_2010,
	title = {Extended-{Connectivity} {Fingerprints}},
	volume = {50},
	issn = {1549-9596},
	url = {https://doi.org/10.1021/ci100050t},
	doi = {10.1021/ci100050t},
	number = {5},
	journal = {Journal of Chemical Information and Modeling},
	author = {Rogers, David and Hahn, Mathew},
	month = may,
	year = {2010},
	note = {Publisher: American Chemical Society},
	pages = {742--754},
}

@article{durant_reoptimization_2002,
	title = {Reoptimization of {MDL} {Keys} for {Use} in {Drug} {Discovery}},
	volume = {42},
	issn = {0095-2338},
	url = {https://doi.org/10.1021/ci010132r},
	doi = {10.1021/ci010132r},
	number = {6},
	journal = {Journal of Chemical Information and Computer Sciences},
	author = {Durant, Joseph L. and Leland, Burton A. and Henry, Douglas R. and Nourse, James G.},
	month = nov,
	year = {2002},
	note = {Publisher: American Chemical Society},
	pages = {1273--1280},
}

@article{kangas_efficient_2014,
	title = {Efficient discovery of responses of proteins to compounds using active learning},
	volume = {15},
	issn = {1471-2105},
	url = {https://bmcbioinformatics.biomedcentral.com/articles/10.1186/1471-2105-15-143},
	doi = {10.1186/1471-2105-15-143},
	language = {en},
	number = {1},
	urldate = {2025-07-25},
	journal = {BMC Bioinformatics},
	author = {Kangas, Joshua D and Naik, Armaghan W and Murphy, Robert F},
	month = dec,
	year = {2014},
	note = {Publisher: Springer Science and Business Media LLC},
}

@article{passaro2025boltz2,
  author = {Passaro, Saro and Corso, Gabriele and Wohlwend, Jeremy and Reveiz, Mateo and Thaler, Stephan and Somnath, Vignesh Ram and Getz, Noah and Portnoi, Tally and Roy, Julien and Stark, Hannes and Kwabi-Addo, David and Beaini, Dominique and Jaakkola, Tommi and Barzilay, Regina},
  title = {Boltz-2: Towards Accurate and Efficient Binding Affinity Prediction},
  year = {2025},
  doi = {10.1101/2025.06.14.659707},
  journal = {bioRxiv}
}

@article{rives2019biological,
  author={Rives, Alexander and Meier, Joshua and Sercu, Tom and Goyal, Siddharth and Lin, Zeming and Liu, Jason and Guo, Demi and Ott, Myle and Zitnick, C. Lawrence and Ma, Jerry and Fergus, Rob},
  title={Biological Structure and Function Emerge from Scaling Unsupervised Learning to 250 Million Protein Sequences},
  year={2019},
  doi={10.1101/622803},
  url={https://www.biorxiv.org/content/10.1101/622803v4},
  journal={PNAS}
}

@article{praski2025benchmarking,
  title={Benchmarking pretrained molecular embedding models for molecular representation learning},
  author={Praski, Mateusz and Adamczyk, Jakub and Czech, Wojciech},
  journal={arXiv preprint arXiv:2508.06199},
  year={2025}
}

@article{mysinger2012directory,
  title={Directory of useful decoys, enhanced ({DUD-E}): better ligands and decoys for better benchmarking},
  author={Mysinger, Michael M and Carchia, Michael and Irwin, John J and Shoichet, Brian K},
  journal={Journal of medicinal chemistry},
  volume={55},
  number={14},
  pages={6582--6594},
  year={2012},
  publisher={ACS Publications}
}

@article{tran2020lit,
  title={{LIT-PCBA}: an unbiased data set for machine learning and virtual screening},
  author={Tran-Nguyen, Viet-Khoa and Jacquemard, C{\'e}lien and Rognan, Didier},
  journal={Journal of chemical information and modeling},
  volume={60},
  number={9},
  pages={4263--4273},
  year={2020},
  publisher={ACS Publications}
}

@article{chithrananda2020chemberta,
  title={{ChemBERTa}: large-scale self-supervised pretraining for molecular property prediction},
  author={Chithrananda, Seyone and Grand, Gabriel and Ramsundar, Bharath},
  journal={arXiv preprint arXiv:2010.09885},
  year={2020}
}

@article{paricharak_analysis_2016,
	title = {Analysis of {Iterative} {Screening} with {Stepwise} {Compound} {Selection} {Based} on {Novartis} {In}-house {HTS} {Data}},
	volume = {11},
	issn = {1554-8929, 1554-8937},
	url = {https://pubs.acs.org/doi/10.1021/acschembio.6b00029},
	doi = {10.1021/acschembio.6b00029},
	language = {en},
	number = {5},
	urldate = {2026-05-06},
	journal = {ACS Chemical Biology},
	author = {Paricharak, Shardul and IJzerman, Adriaan P. and Bender, Andreas and Nigsch, Florian},
	month = may,
	year = {2016},
	pages = {1255--1264},
}

%%%%%%%%%%%%%%%%%%%%%%%%%%%%%%%%%%%%%%%%%%%%%%%%%%%%%%%%%%%%%%%%%%%%%%%%%%%%%%%
%%%%%%%%%%%%%%%%%%%%%%%%%%%%%%%%%%%%%%%%%%%%%%%%%%%%%%%%%%%%%%%%%%%%%%%%%%%%%%%
% APPENDIX
%%%%%%%%%%%%%%%%%%%%%%%%%%%%%%%%%%%%%%%%%%%%%%%%%%%%%%%%%%%%%%%%%%%%%%%%%%%%%%%
%%%%%%%%%%%%%%%%%%%%%%%%%%%%%%%%%%%%%%%%%%%%%%%%%%%%%%%%%%%%%%%%%%%%%%%%%%%%%%%
\newpage
\appendix
% \section{Failure to reach target PIC}
% \label{sec:app:failureToTargetPIC}

% \begin{table*}[htbp]
% \centering
% {\small
% \begin{tabular}{c|ccccc|ccccc}
% \toprule
%  & \multicolumn{5}{c|}{{\bf Average (Top-10)}} & \multicolumn{5}{c}{{\bf Min (Top-3)}}\\\cline{2-11}
%  & \multicolumn{5}{c|}{\textbf{Target PIC}}  & \multicolumn{5}{c}{\textbf{Target PIC}}\\
% {\bf Method} & 7.0 & 7.5 & 8.0 & 8.5 & 9.0 & 7.0 & 7.5 & 8.0 & 8.5 & 9.0 \\
% \midrule
% XGB 	& 40\% 	& -17\% 	& -8\% 	& 2\% 	& -3\% 	& 100\% 	& -18\% 	& 16\% 	& 8\% 	& 0\% \\
% MLP 	& 40\% 	& 19\% 	& -1\% 	& 0\% 	& 9\% 	& 100\% 	& 2\% 	& 12\% 	& 7\% 	& 4\% \\
% XGBReg 	& -20\% 	& 10\% 	& 0\% 	& 22\% 	& 10\% 	& -50\% 	& 2\% 	& 15\% 	& 26\% 	& 14\% \\
% \bottomrule
% \end{tabular}
% }
% \caption{{\em Failure to reach target PIC for other methods (relative to \ourmethod):}
% We show how much more often a given method fails to reach a target PIC, as compared to \ourmethod (higher is worse).
% \txtred{FILL}
% }
% \label{tbl:failureToTargetForOthers}
% \end{table*}

\section{Proofs}
\label{sec:app:proofs}

\begin{proof}[Proof of Theorem~\ref{thm:eloss}]
The proof is similar to Theorem~1 of~\cite{chakrabarti_robust_2022}.
We have
\begin{align}
E_{\bx\sim\mathcal{N}(\bx_i, \sigma^2 I)} &\left[\ell(C(\bx), y=1)\right]\nonumber\\
=& E_{\bx\sim\mathcal{N}(\bx_i, \sigma^2 I)} \max\left(0, 1 - (c + \bw^T\bx)\right)\nonumber\\
=& E_{\bv \sim \mathcal{N}(1-(c+\bw^T\bx_i), \sigma^2\bw^T\bw)} \max\left(0, \bv\right)\nonumber\\
=& s_i\cdot \Phi\left(\frac{s_i}{\sigma\|\bw\|}\right) + \sigma\|\bw\|\cdot \phi\left(\frac{s_i}{\sigma\|\bw\|}\right),\nonumber
\end{align}
where $s_i=1-(c+\bw^T\bx_i)$.
The first equality uses the form of the loss function.
The second equality follows from a change of variables.
The third equality comes from the formula for expectations of truncated normals.
\end{proof}

\section{Experimental Details}
\label{sec:app:expdetails}

\paragraph{Hyperparameters:}
In each DMTA cycle, we demeaned the ligand embedding vectors, but did not scale them.
For the classification-based methods (XGBoost and MLP), we set the top $10\%$ of the ligands in $\mathcal{D}_{seen}$ as positive.
For XGBoost and XGBRegressor, we set the trees to have a maximum depth of $6$ and an $L_2$ regularization of $1$.
For the MLP, we used one hidden layer with $32$ units.
For all methods (including \ourmethod), the best settings were chosen by optimizing the {\em average top-10} metric for a target PIC of $8$ for $<10$ proteins.
%These settings were chosen because the gave the best results for the {\em average top-10} metric for a target PIC of $8$.
We also observed that the metrics were not very sensitive to choice of parameters.
Hence, we believe that the results reflect the intrinsic abilities of these methods.

\begin{figure*}[htbp]
    \includegraphics[width=\textwidth]{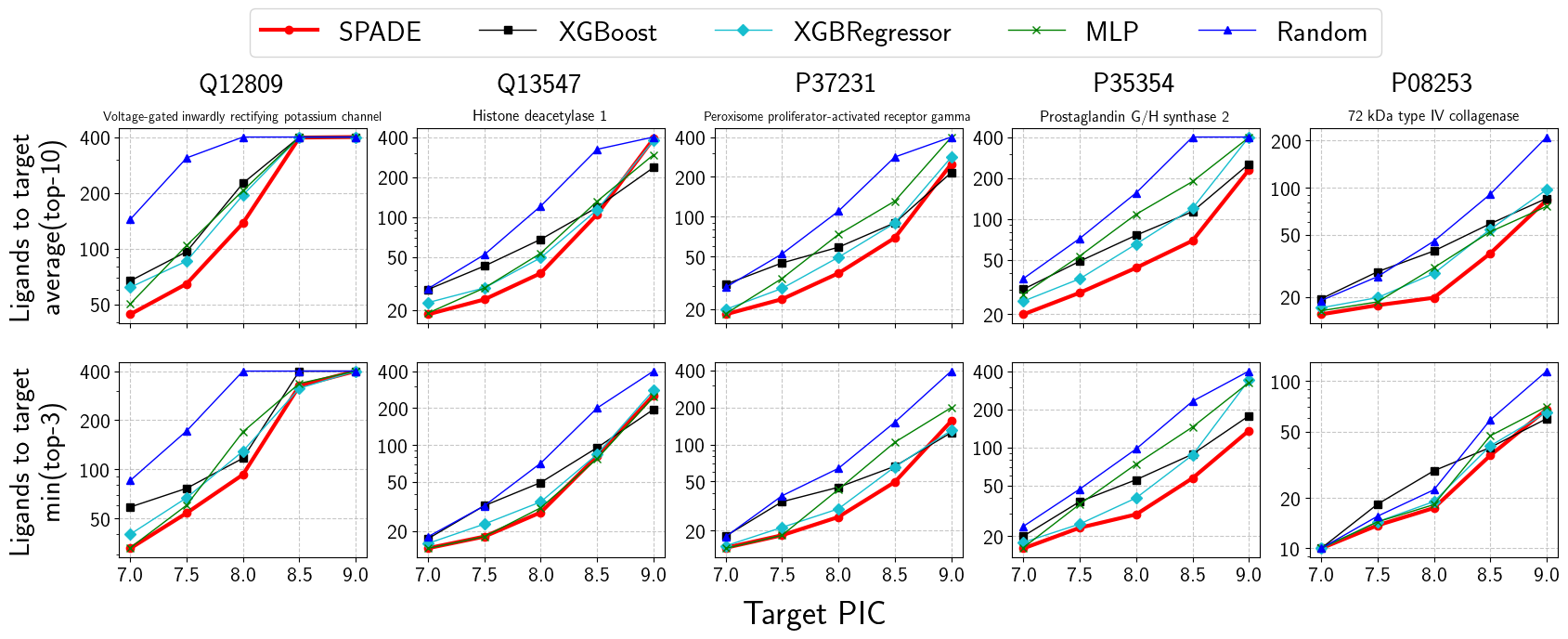}
    \caption{{\em Ligands-to-target for five example proteins (lower is better):}
    We show the number of ligand tests needed to reach a target PIC for the {\em average top-10} metric (top panel) and the {\em min top-3} metric (bottom panel).
 The UniProt IDs and names of the proteins are shown at the top.
 \ourmethod (red circles) is almost always the fastest to any target PIC.
 XGBoost (black squares) tends to be close to Random (blue triangles) initially, but improves later.
 In contrast, MLP (green crosses) is similar to \ourmethod initially, but underperforms later.
 XGBRegressor (cyan diamonds)  is in the middle.}
    \label{fig:best5}
\end{figure*}

\begin{table}
    \centering
    \footnotesize
    \begin{tabular}{c|cccc}
     & & {\bf Mean ligand tests for} & {\bf Mean ligand tests for} & {\bf Mean ligand tests for} \\
     & {\bf Median PIC} & {\bf race to 8} & {\bf race to 8.5} & {\bf race to 9}\\
    \toprule
       All ligands  & 6.0  & 33.1 & 83.4 &  273.5 \\
       Only top 1000 ligands  & 6.4 & 22.9 & 54.6 & 231.1\\
       \midrule
       {\bf Improvement} & {\bf 7.1\%} & {\bf 25.8\%}   & {\bf 31.4\%} & {\bf 18.4\%} \\
       \bottomrule
    \end{tabular}
    \vspace{1em}
    \caption{{\em \ourmethod after filtering using similar proteins:}
    We trained an XGBoost classifier to predict if a ligand has a PIC above $8$.
    Next, we applied this classifier to rank-order ligands for 10 unseen proteins.
    For each protein, we selected the top 1,000 ligands from around 16,000 candidates.
    Then, we ran \ourmethod on this filtered set of 1,000 ligands.
    We report the trimmed mean of the median PIC for both full and filtered sets.
    In addition, we also report the trimmed mean of the mean-ligands-to-target (MLT) for target PICs of 8, 8.5, and 9.
    Overall, the filtering improves the PIC distribution: the median PIC increases by 7\%.
    This filtering also enhances \ourmethod's performance for all target PICs, yielding up to a 31\% reduction in MLT at PIC 8.5.
    }
    \label{tab:filter}
\end{table}

\textbf{Compute Resources:} All experiments were run on a single workstation with an Intel Core i9-10980XE CPU (18 cores / 36 threads), 256 GB of RAM, and 2× NVIDIA RTX 3090 GPUs (24 GB each). The 100-protein benchmark is parallel across proteins and across the 50 random initialisations per protein, so we ran these as independent processes. 
%GPUs were used only for the deep-learning baselines (TabM and TabPFN); SPADE, the GP baselines (L-BFGS-B fit), XGBoost, the MLP, and the Random control all run on CPU. 
Wall-clock scoring times reported in Table~\ref{tbl:time} are per-process (not summed across parallel workers).

\end{document}